\date{}
\DeclareMathOperator*{\argmax}{arg\,max}
\DeclareMathOperator*{\argmin}{arg\,min}
\newcommand{\at}[1]{\protect\ensuremath{\mathsf{#1}}\xspace}
\newtheorem{definition}{Definition}
\newtheorem{theorem}{Theorem}
\newtheorem{corollary}{Corollary}
\newtheorem{lemma}{Lemma}
\newtheorem{example}{Example}
\newcommand{\stitle}[1]{\vspace{1ex}\noindent{\bf #1}}
\newcommand{\mg}[1]{\textcolor{purple}{Merve: #1}}
\newcommand{\xc}[1]{\textcolor{blue}{Xu: #1}}
\newcommand{\update}[1]{#1}
\newcommand{\figfmt}{pdf}
\begin{document}

\title{Nearest Neighbor Classifiers over Incomplete Information:\\
From Certain Answers to Certain Predictions \footnote{The first two authors contribute equally to this paper and are listed alphabetically.}
}

\author{
Bojan Karlaš$^{*,\dagger}$,
Peng Li$^{*,\ddagger}$,
Renzhi Wu$^{\ddagger}$,
Nezihe Merve Gürel$^{\dagger}$, \\
Xu Chu$^{\ddagger}$,
Wentao Wu$^{\mathsection}$,
Ce Zhang$^{\dagger}$ \\
\small $^\dagger$ETH Zurich, $^\ddagger$Georgia Institute of Technology, $^\mathsection$Microsoft Research
}

\maketitle

\begin{abstract}
Machine learning (ML) applications have been thriving recently, largely attributed to the increasing availability of data.
However, inconsistency
and incomplete information are ubiquitous in real-world datasets, and their impact on ML applications remains elusive.
In this paper, we present a formal study of this impact by extending the notion of \emph{Certain Answers for Codd tables}, which has been explored by the database research community for decades, into the field of machine learning.
Specifically, we focus on classification problems and propose the notion of \emph{
``Certain Predictions'' (CP)} --- a test data example can be \textit{certainly predicted (CP'ed)} if 
\emph{all} possible classifiers trained on top of all possible worlds
induced by the incompleteness of data
would yield the \emph{same} prediction.
We study two fundamental CP queries: (Q1) \emph{checking query} that determines whether a data example can be CP'ed; and (Q2) \emph{counting query} that computes the number of classifiers that support a particular prediction (i.e., label).
Given that general solutions to CP queries are, not surprisingly, hard without assumption over the type of classifier, we further present a case study in the context of nearest neighbor (NN) classifiers, where efficient solutions to CP queries can be developed ---
we show that it is possible to answer both queries 
in {\em linear or polynomial time}
over {\em exponentially}
many possible worlds.
We demonstrate one example
use case of CP in the important application of ``data cleaning for machine learning (DC for ML).'' We show that our proposed CPClean approach built based on CP can often significantly outperform existing techniques in terms of classification accuracy with mild manual cleaning effort.
\end{abstract}






\section{Introduction}

Building high-quality 
Machine learning (ML) applications 
often hinges on the availability of
high-quality data.
However, due to noisy inputs from manual data curation or inevitable errors from automatic data collection/generation programs, in reality, data is unfortunately seldom clean.
Inconsistency and incompleteness are ubiquitous in real-world datasets, and therefore can have an impact on ML applications trained on top of them.
In this paper, we focus on the question:
\textit{Can we reason about the 
impact of data incompleteness on the 
quality of ML models trained over
it?}

\begin{figure}[t]
\centering
\includegraphics[width=0.6\textwidth]{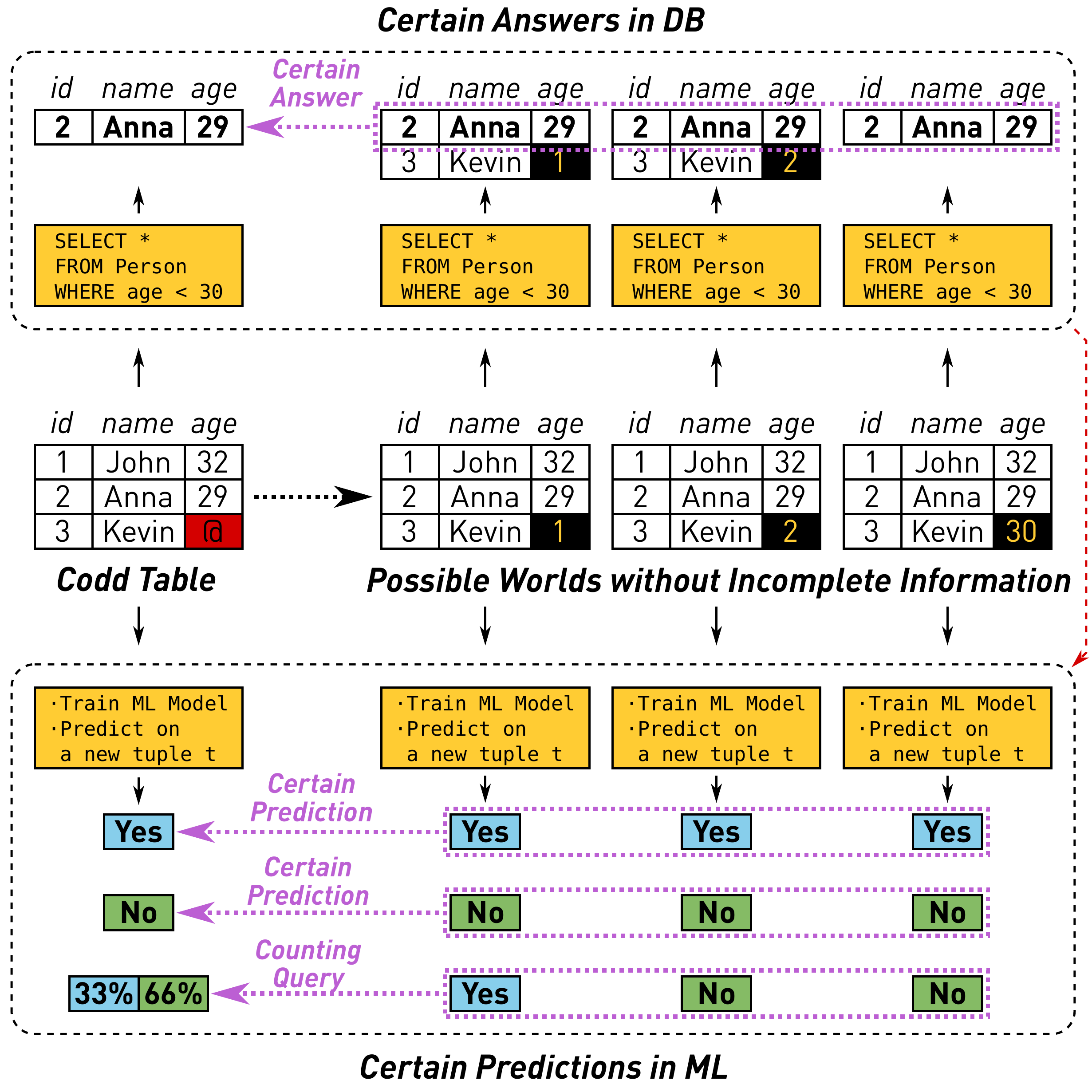}
\caption{An illustration of the
relationship between \textit{certain answers}
and \textit{certain predictions}.}
\label{fig:sureanswer}
\end{figure}

Figure~\ref{fig:sureanswer} illustrates 
one dataset with
incomplete information. In this example,
we have the incomplete dataset $D$ with
one missing cell (we will
focus on cases in which there are 
\textit{many} cells with incomplete information) --- the 
age of Kevin is not 
known and therefore is set as \texttt{NULL} (@). 
Given an ML training algorithm $\mathcal{A}$,
we can train an ML model over $D$, $\mathcal{A}_D$, and given a clean test
example $t$, we can get the prediction of 
this ML model $\mathcal{A}_D(t)$. The focus 
of this paper is to understand \textit{how much impact
the incomplete information (@) has
on the prediction $\mathcal{A}_D(t)$}. This 
question is not only of theoretical interest
but can also have interesting practical implications ---
for example, if we know that,
for a large enough number of samples of
$t$, the incomplete information
(@) does not have an impact on $\mathcal{A}_D(t)$
at all, spending the effort of cleaning or acquiring
this specific piece of missing information will
not change the quality of downstream ML models.

\vspace{0.5em}
\noindent
{\bf \textit{Relational Queries over 
Incomplete Information}.}
This paper is 
inspired by
the \textit{algorithmic}
and \textit{theoretical} foundations of 
running {\em relational queries
over incomplete information}~\cite{Alicebook}.
In traditional database theory,
there are multiple ways of representing
incomplete information, starting from the 
Codd table, or the conditional table (c-table),
all the way to the recently studied probabilistic 
conditional table (pc-table)~\cite{PDBBOOK}. Over
each of these representations of
incomplete information,
one can define the corresponding 
semantics of a relational query.
In this paper, we focus on the 
weak representation system built upon
the Codd table, as illustrated in 
Figure~\ref{fig:sureanswer}.
Given a Codd table
$T$ with constants and $n$ variables
over domain $\mathcal{D}_v$
(each variable only appears once and
represents the incomplete information
at the corresponding cell),
it represents $|\mathcal{D}_v|^n$
many \textit{possible worlds} $rep(T)$,
and a query $\texttt{Q}$
over $T$ can be defined as
returning the \textit{certain answers}
that \textit{always} appear in the answer of 
$\texttt{Q}$ over each possible 
world:
\[
sure(\texttt{Q}, T) = \cap\{\texttt{Q}(I) | I \in rep(T)\}.
\]
Another line of work with similar
spirit is \textit{consistent query
answering}, which was first introduced
in the seminal work by Arenas, Bertossi, and Chomicki~\cite{DBLP:conf/pods/ArenasBC99}.
Specifically, given an inconsistent 
database instance $D$, it defines a set of repairs $\mathcal{R
}_D$, each of which is a consistent database instance.
Given a query $\texttt{Q}$, a tuple $t$ is a {\em consistent answer}
to $\texttt{Q}$ if and only if $t$ appears in \textit{all 
answers} of $\texttt{Q}$ evaluated on every consistent 
instance $D' \in \mathcal{R}_D$. 

Both lines of work lead to a similar way of
thinking in an effort to reason about
data processing over incomplete information, i.e., 
{\em to reason about 
certain/consistent answers over all possible 
instantiations of incompleteness and uncertainty.}

\vspace{0.5em}
\noindent
{\bf \emph{Learning Over
Incomplete Information: Certain
Predictions (CP).}}
The traditional database view provides us a powerful
tool to reason about the impact of data incompleteness 
on downstream operations.
In this paper, we take a natural step and 
extend this to machine learning (ML) --- given a Codd table $T$,
its $|\mathcal{D}_v|^n$ many
possible worlds $rep(T)$, and
an ML classifier $\mathcal{A}$, 
one could train one 
ML model $\mathcal{A}_{I}$ for each
possible world $I \in rep(T)$. 
Given a test example
$t$, we say that $t$ can be {\em certainly predicted (CP'ed)} if 
$\forall I \in rep(T)$, $\mathcal{A}_{I}(t)$ always 
yields the same class label,
as illustrated in Figure~\ref{fig:sureanswer}.
This notion of certain prediction (CP) offers a canonical view of the impact from training classifiers on top of incomplete data.
Specificlly, we consider the following two 
CP queries:

\begin{itemize}
    \item[\textbf{(Q1)}] \textbf{Checking Query} --- Given a test data example, determine whether it can be CP'ed or not;
    \item[\textbf{(Q2)}] \textbf{Counting Query} --- Given a test data example that \emph{cannot} be CP'ed, for each possible prediction, compute the number of classifiers that \textit{support} this prediction.
\end{itemize}

When no assumptions are made about the classifier, Q1 and Q2 are,
not surprisingly, hard.
In this paper, we focus on (1)
developing efficient 
solutions to both Q1 and Q2
for a specific family of classifiers,
while (2) in the meantime,
trying to understand the
empirical implication and
application of CP to
the emerging research topic of
\textit{data cleaning for machine learning}.

\vspace{0.5em}
\noindent
{\bf \emph{Efficient CP
Algorithm for Nearest Neighbor 
Classifiers.}} We first study 
efficient algorithms to answer
both CP queries for 
K-nearest neighbor (KNN) classifier, one of the most popular classifiers used in practice.
Surprisingly, we show that,
\textit{both CP queries can 
be answered in polynomial time,
in spite of there being exponentially
many possible worlds}!

Moreover, these algorithms can be made 
very efficient. For example,
given a Codd table with $N$ rows and at most $M$ possible versions for rows with missing values, 
we show that answering 
both queries only take 
$\mathcal{O} \left(N \cdot M \cdot (\log(N\cdot M) +  K \cdot \log N ) \right)$. For Q1 in the binary 
classification case, we can
even do 
$\mathcal{O} (N \cdot M)$!
This makes it possible to efficiently
answer both queries for the KNN classifier,
a result that is both 
\textit{surprising} (at least to us),
\textit{new}, and
\textit{technically non-trivial}.

{\em \underline{Discussion}: Relationship with 
answering KNN queries 
over probabilistic databases.} As we will see
later, our result can be used to
evaluate a KNN classifier over a 
tuple-independent database,
in its standard semantics~\cite{10.1145/2213556.2213588,10.1145/2955098,10.5555/1783823.1783863}.
Thus we hope to draw the reader's attention 
to an interesting line of work of
evaluating KNN
\textit{queries} over a probabilistic database
in which the user wants the system to return 
the probability of a given (in our setting,
training) tuple that is 
in the top-K list of a query. 
Despite the similarity of the naming
and the underlying data model,
we focus on a different problem in this paper
as we care about the result of a KNN \textit{\underline{classifier}} instead of a KNN \textit{\underline{query}}.
Our algorithm is very different and heavily  relies on the structure of the classifier.

\vspace{0.5em}
\noindent
{\bf \emph{Applications to Data Cleaning for Machine Learning.}} The above 
result is not only of theoretical 
interest, but also has an interesting
empirical implication --- \textit{intuitively,
the notion of CP provides us a way 
to measure the relative importance 
of different variables in the Codd table
to the downstream classification accuracy}.
\update{Inspired by this intuition,
we study the efficacy of CP in the imporant application of ``data cleaning for machine learning (DC for ML)''~\cite{krishnan2017boostclean,DBLP:journals/pvldb/KrishnanWWFG16}.
Based on the CP framework,
we develop a novel algorithm CPClean that
prioritizes manual cleaning 
efforts given a dirty dataset.}

Data cleaning (DC) is often an important prerequisite step in the entire pipeline of an ML application.
Unfortunately, most existing work
considers DC as a standalone exercise without considering its impact on downstream ML applications
(exceptions include exciting seminal work such as
ActiveClean~\cite{DBLP:journals/pvldb/KrishnanWWFG16} and BoostClean~\cite{krishnan2017boostclean}). Studies have shown that such \emph{oblivious} data cleaning may not necessarily improve downstream ML models' performance~\cite{li2019cleanml}; worse yet, it can sometimes even degrade ML models' performance due to Simpson's paradox~\cite{DBLP:journals/pvldb/KrishnanWWFG16}. 
We propose a novel ``DC for ML'' framework built on top of certain predictions.
In the following discussion, we assume a standard setting for building ML models, where we are given a training set $D_{\text{train}}$ and a validation set $D_{\text{val}}$ that are drawn independently from the same underlying data distribution.
We assume that $D_{\text{train}}$ may contain missing information  whereas $D_{\text{val}}$ is complete.

The intuition of our framework is
as follows.
When the validation set is sufficiently large, if Q1 returns \verb|true| for every data example $t$ in $D_{\text{val}}$, then with high probability cleaning $D_{\text{train}}$ will not have impact on the model accuracy.
In this case we can immediately finish without any human cleaning effort.
Otherwise, some data examples cannot be CP'ed, and our goal is then to 
clean the data such that all these
examples can be CP'ed.
\emph{Why is this sufficient}?
The key observation is that, as long as a tuple $t$ can be CP'ed, 
the prediction will remain
the same regardless of further cleaning efforts.
That is, even if we clean the whole $D_{\text{train}}$, the prediction for $t$ (made by the classifier using the \emph{clean} $D_{\text{train}}$) will remain the same, simply because the final clean version is one of the possible worlds of $D_{\text{train}}$ that has been included in the definition of CP!

To \emph{minimize} the number of tuples in $D_{\text{train}}$ being cleaned until all data examples in $D_{\text{val}}$ are CP'ed, we further propose a novel optimization algorithm based on
the principle of \emph{sequential information maximization}~\cite{Chen2015SequentialIM}, exploiting the counts in Q2 for each example in $D_{\text{val}}$ that cannot be certainly predicted.
The optimization algorithm is \emph{iterative}: Each time we pick the next example in $D_{\text{train}}$ (to be cleaned) based on its potential impact on the ``degree of certainty'' of $D_{\text{train}}$ after cleaning (see Section~\ref{sec:seq-info-max} for more details).
%




\paragraph*{Summary of Contributions}
In summary, this paper makes the following contributions:
\begin{itemize}
    \item [\textbf{(C1)}] We propose \textit{certain predictions}, as well as its two fundamental queries/primitives (checking and counting), as a tool to study the impact of incomplete data on training ML models. 
    \item [\textbf{(C2)}] We propose efficient solutions to the two fundamental CC queries for nearest neighbor classifiers, despite the hardness of these two queries in general.
    \item [\textbf{(C3)}] We propose a novel ``DC for ML'' approach, CPClean, built on top of the CP primitives that significantly outperforms existing work in terms of classification accuracy, 
    with mild manual cleaning effort.
\end{itemize}

\paragraph*{Moving Forward}
Just like the study of consistent query
answering that focuses on specific
subfamilies of queries, in this paper we have focused on a specific  
type of classifier, namely the
KNN classifier,
in the CP framework.
This allows us to design efficient 
algorithms specific to this workload.
In the future, it is interesting to extend our study to 
a more diverse range of classifiers ---
either to develop efficient exact 
algorithms or to explore efficient 
approximation algorithms. It is also interesting to 
extend our CP-based data cleaning framework
to more types of classifiers.

\paragraph*{Paper Organization} This paper is organized as follows.
We formalize the notion of certain
predictions, as well as the two primitive queries Q1 and Q2 ( Section~\ref{sec:preliminary}).
We then propose efficient algorithms
in the context of nearest neighbor classifiers (Section~\ref{sec:cc-knn}).
We follow up by proposing our novel ``DC for ML'' framework exploiting CP (Section~\ref{sec:cc:general}).
We report evaluation results in Section~\ref{sec:evaluation}, summarize related work in Section~\ref{sec:related-work}, and conclude the paper in Section~\ref{sec:conclusion}.

\section{Certain Prediction (CP)}
\label{sec:preliminary}

In this section, we describe the 
\textit{certain prediction} (CP) framework, which 
is a natural extension of the notion 
of \textit{certain answer} for 
query processing over Codd tables~\cite{Alicebook} 
to machine learning.
We first describe our data model and then
introduce two CP queries.

\paragraph*{Data Model} 

We focus on standard supervised ML settings: 
\begin{enumerate}
\item Feature Space $\mathcal{X}$: without 
loss of generality, we assume that every
data example is drawn from a domain $\mathcal{X} = \mathbb{D}^d$, i.e., a $d$ dimensional
space of data type $\mathbb{D}$.
\item Label Space $\mathcal{Y}$: we assume
that each data example can be classified into
one of the labels in $\mathcal{Y}$.
\item Training Set $D_{train} \subseteq 
\mathcal{X} \times \mathcal{Y}$ is drawn
from an \textit{unknown} distribution $\mathcal{P}_{\mathcal{X}, \mathcal{Y}}$.
\item Test Set $D_{test} \subseteq 
\mathcal{X}$
(Validation Set $D_{val}$)
is drawn from the marginal distribution $\mathcal{P}_{\mathcal{X}}$ of the joint distribution $\mathcal{P}_{\mathcal{X}, \mathcal{Y}}$.
\item Training Algorithm $\mathcal{A}$:
A training algorithm $\mathcal{A}$ is
a functional that maps a given training
set $D_{train}$ to a function $\mathcal{A}_{D_{train}}: \mathcal{X} \mapsto \mathcal{Y}$. Given a test example 
$t \in D_{test}$, $\mathcal{A}_{D_{train}}(t)$
returns the prediction of the trained
classifier on the test example $t$.
\end{enumerate}

\begin{figure}
    \centering
    \includegraphics[width=0.5\textwidth]{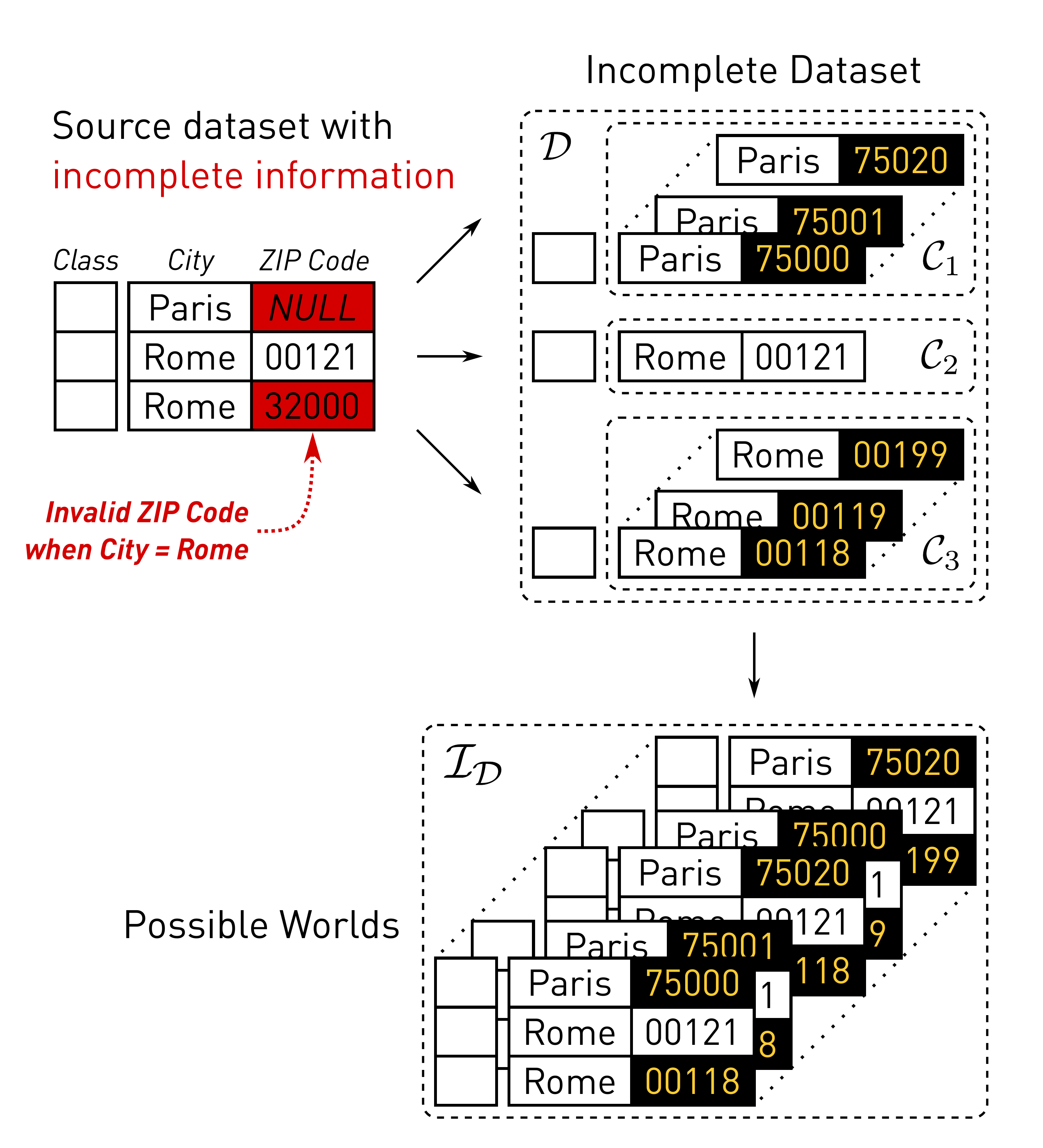}
    \caption{Example of a dataset with incomplete information, its representation as an \emph{incomplete dataset}, and the induced set of \emph{possible worlds}.}
    \label{fig:possible-worlds}
\end{figure}

\noindent
{\bf \emph{Incomplete Information in the Training
Set}} In this paper, we focus on the case in which
there is incomplete information in the
training set. We define an \textit{incomplete
training set} as follows.

Our definition of an incomplete
training set is very similar to a
block tuple-independent 
probabilistic database~\cite{PDBBOOK}.
However,
we do assume that there is no
uncertainty on the label and we do not have 
access to the probability distribution of each tuple.
%
\begin{definition}[Incomplete Dataset] \label{def:incomplete-dataset}
An \emph{incomplete dataset}
\[
\mathcal{D} = \{(\mathcal{C}_i, y_i) : i=1,...,N\}
\]
is a finite set of $N$ 
pairs where each $\mathcal{C}_i = \{ x_{i,1}, x_{i,2}, ... \} \subset \mathcal{X}$ is a finite number 
of possible feature vectors of the $i$-th data example and each $y_i \in \mathcal{Y}$ is its corresponding class label.
\end{definition}

According to the semantics of $\mathcal{D}$, the $i$-th data example can take any of the values from its corresponding 
\textit{candidate set} $\mathcal{C}_i$. The space of all possible ways to assign values to all data points in $\mathcal{D}$ is captured by the notion of possible worlds.
Similar to a block tuple-independent 
probabilistic database,
an incomplete dataset can define a
set of \textit{possible worlds}, each of which is a 
dataset without incomplete information.

\begin{definition}[Possible Worlds] \label{def:clean-world}
Let $\mathcal{D} = \{(\mathcal{C}_i, y_i) : i=1,...,N\}$ be an incomplete dataset. We define the set of possible worlds $\mathcal{I}_\mathcal{D}$, given the incomplete dataset $\mathcal{D}$, as
$$
\mathcal{I}_{\mathcal{D}} \ = \
 \left\{
 D = \{(x_i', y_i')\} :
 |D| = |\mathcal{D}| \wedge 
 \forall i.~x_i' \in \mathcal{C}_i \wedge y_i' = y_i 
 \right\}.
$$
\end{definition}

In other words, a 
\emph{possible world} represents 
one complete dataset $D$ that is generated from $\mathcal{D}$ by replacing every
candidate set $\mathcal{C}_i$ with one of its candidates $x_j \in \mathcal{C}_i$. The set of all distinct datasets that we can generate in this way is referred to as the \emph{set of possible worlds}. If we assume that $\mathcal{D}$ has $N$ data points and the size of each $\mathcal{C}_i$ is bounded by $M$, we can see $|\mathcal{I}_{\mathcal{D}}| = \mathcal{O} (M^N)$. 

Figure~\ref{fig:possible-worlds} provides an example of these concepts. As we can see, our definition of incomplete dataset can represent both possible values for  missing cells and possible repairs for cells that are considered to be potentially incorrect.

\vspace{0.5em}
\noindent
{\bf \emph{Connections to Data Cleaning.}} 
In this paper, we 
use data cleaning as one
application to illustrate the 
practical implication of 
the CP framework. In this setting,
each possible world
can be thought of as one 
possible \emph{data repair} 
of the dirty/incomplete data.
These repairs can be generated in an arbitrary way, possibly depending on the entire dataset~\cite{rekatsinas2017holoclean}, or even some external domain knowledge~\cite{ChuKATARA}. Attribute-level data repairs could also be generated independently and merged together with Cartesian products. 


We will further  
apply the assumption that any given incomplete dataset $\mathcal{D}$ is \emph{valid}. That is, for every data point $i$, we assume that there exists a \emph{true value} $x_i^*$ that is unknown to us, but is nevertheless included in the candidate set $\mathcal{C}_i$. 
This is a commonly used assumption in data cleaning~\cite{DBLP:books/acm/IlyasC19}, where automatic cleaning algorithms are used to generate a set of candidate repairs, and humans are then asked to pick one from the given set.
We call 
$D_\mathcal{D}^*$ the \textit{true 
possible world}, which contains 
the true value for each tuple.
When $\mathcal{D}$ is clear from the context, 
we will also write $D^*$.

\subsection{Certain Prediction (CP)}

When we train an ML model over
an incomplete dataset, we can define
its semantics in a way that is very 
similar to how people define 
the semantics for data processing over
probabilistic databases ---
we denote $\mathcal{A}_{D_i}$
as the classifier that was trained on the possible world $D_i \in \mathcal{I}_\mathcal{D}$. 
Given a test data point $t \in \mathcal{X}$, 
we say that it can be \textit{certainly
predicted} (CP'ed) if 
all classifiers trained on 
all different possible worlds
agree on their predictions:

\begin{definition}[Certain Prediction (CP)] \label{def:cc}
Given an incomplete dataset $\mathcal{D}$ with its set of possible worlds $\mathcal{I}_\mathcal{D}$ and a data point $t \in \mathcal{X}$, we say that a label $y \in \mathcal{Y}$ can be \emph{certainly predicted} with respect to a learning algorithm $\mathcal{A}$ if and only if
$$
\forall D_i \in \mathcal{I}_\mathcal{D}, \mathcal{A}_{D_i}(t) = y.
$$
\end{definition}

\noindent
{\bf \emph{Connections to Databases.}} 
The intuition behind this definition is rather 
natural from the perspective of database theory. In the context of Codd table,
each \texttt{NULL} variable 
can take values in its domain, which
in turn defines exponentially many
possible worlds~\cite{Alicebook}. Checking 
whether a tuple is in the answer of
some query \texttt{Q} is to check whether such a 
tuple is in the result of 
each possible
world.

\begin{figure}[t!]
    \centering
    \includegraphics[width=0.6\textwidth]{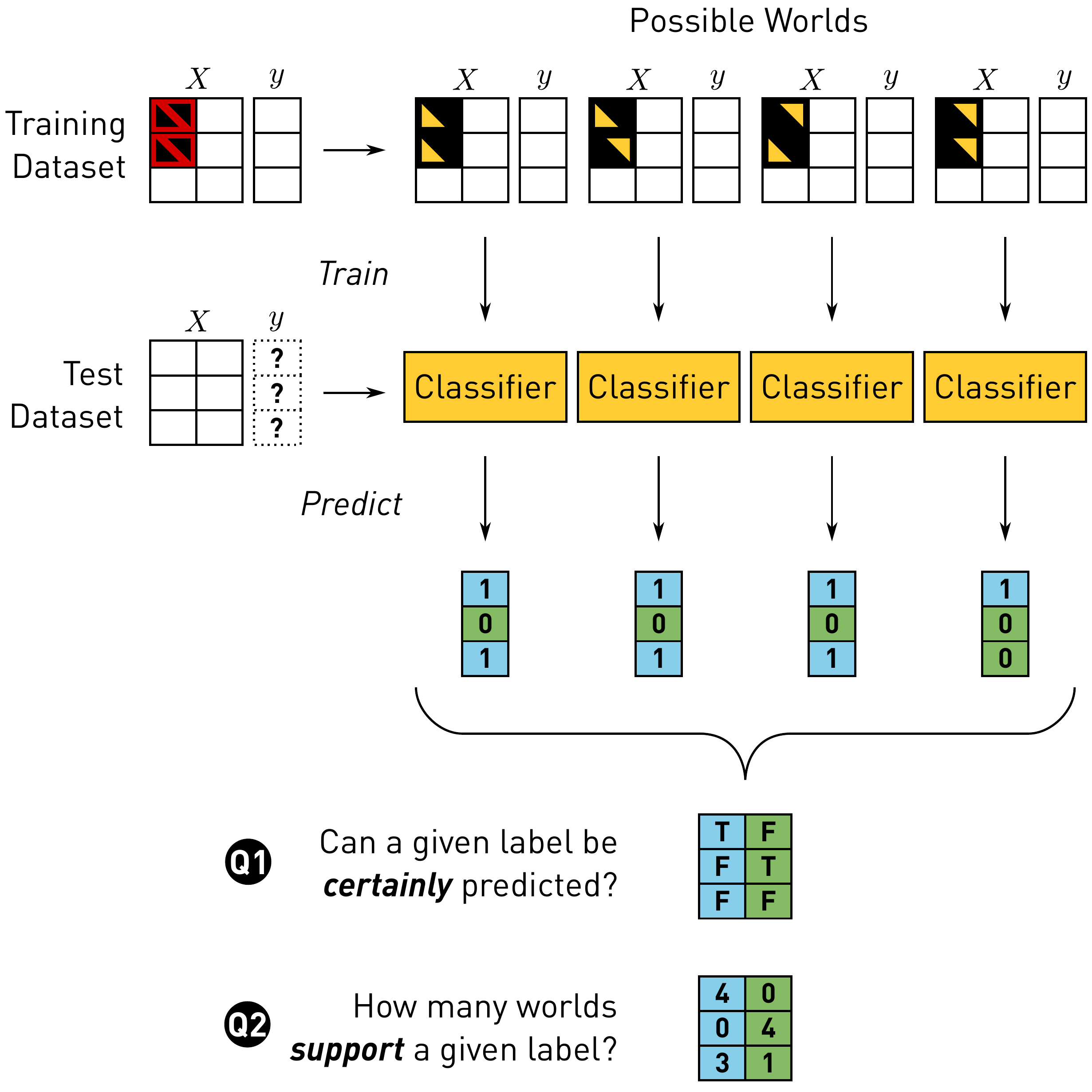}
    \caption{Illustration of certain prediction, and two queries: checking query (Q1) and counting query (Q2).}
    \label{fig:cc}
\end{figure}

\paragraph*{Two Primitive CP Queries}

Given the notion of \textit{certain prediction},
there are two natural 
queries that we can ask.
The query $Q1$ represents a \emph{decision problem} that checks if a given label can be predicted in \textit{all} possible worlds. The query $Q2$ is an extension of that and represents a \emph{counting problem} that returns the number of possible worlds that support each prediction outcome.
Figure~\ref{fig:cc} illustrates both
queries and we formally define
them as follows.

\begin{definition}[Q1: Checking]
Given a data point $t \in \mathcal{X}$, an incomplete dataset $\mathcal{D}$ and a class label $y \in  \mathcal{Y}$, we define a query that checks \emph{if all possible world} permits $y$ to be predicted:
$$
Q1(\mathcal{D}, t, y) := \begin{cases}
	\mathtt{true}, & \mathrm{if} \ \forall D_i \in \mathcal{I}_\mathcal{D}, \mathcal{A}_{D_i}(t) = y; \\
	\mathtt{false}, & \mathrm{otherwise}.
\end{cases}
$$
\end{definition}

\begin{definition}[Q2: Counting]
Given a data point $t \in \mathcal{X}$, an incomplete dataset $\mathcal{D}$ and a class label $y \in  \mathcal{Y}$, we define a query that returns the \emph{number of possible worlds} that permit $y$ to be predicted:
$$
Q2(\mathcal{D}, t, y) := | \{ D_i \in \mathcal{I}_\mathcal{D} : \mathcal{A}_{D_i}(T) = y \}|.
$$
\end{definition}

\noindent
\textbf{\textit{Computational Challenge.}} If we do not make any assumption about the learning algorithm $\mathcal{A}$, we have no way of determining the predicted label $y=\mathcal{A}_{D_i}(t)$ except for running the algorithm on the training dataset. Therefore, for a general classifier treated as a black box, answering both $Q1$ and $Q2$ requires us to apply a brute-force 
approach that iterates over each $D_i \in \mathcal{I}_\mathcal{D}$, produces $\mathcal{A}_{D_i}$, and predicts the label. 
Given an incomplete dataset with
$N$ data examples each of which has
$M$ clean candidates, the computational cost of this naive algorithm for both queries would thus be $\mathcal{O} (M^N)$.

This is not surprising. 
However, 
as we will see later in this paper,
for certain types of classifiers, such as K-Nearest Neighbor classifiers,
we are able to design efficient
algorithms for both queries.

\vspace{0.5em}
\noindent
{\bf \emph{Connections to Probabilistic 
Databases.}}
Our definition of certain prediction 
has strong connection to the theory of probabilistic 
database~\cite{PDBBOOK} --- in fact, Q2 can be seen as a natural 
definition of evaluating
an ML classifier over a block
tuple-independent probabilistic database 
with uniform prior.


Nevertheless, unlike traditional relational queries over a probabilistic database, our ``query''
is an ML model that has very different structure.
As a result, despite the fact that
we are inspired by many seminal works
in probabilistic database~\cite{10.1145/2213556.2213588,10.1145/2955098,10.5555/1783823.1783863}, 
they are not applicable to our settings and 
we need to develop new techniques.

\vspace{0.5em}
\noindent
{\bf \emph{Connections to Data Cleaning.}}
It is easy to see that, if $Q1$
returns \texttt{true} on a test example
$t$, obtaining 
more information (by cleaning) for
the original training set 
will not change the prediction 
on $t$ at all! This is because 
the true possible world $D^*$
is one of the possible worlds in
$\mathcal{I}_\mathcal{D}$.
Given a large enough test set,
if $Q_1$ returns \texttt{true} for
all test examples, cleaning the training set
in this case might not
improve the quality of ML models
at all!

Of course, in practice, it is 
unlikely that all test examples 
can be CP'ed. In this more realistic
case, $Q_2$ provides a ``softer''
way than $Q_1$ to measure the 
\textit{degree of certainty/impact}.
As we will see later, we can use this
as a principled proxy of the impact
of data cleaning on downstream ML
models, and design efficient algorithms to prioritize which uncertain cell 
to clean in the training set.

\begin{figure}[t!]
\centering
\small
\begin{tabular}{cccccc}
\hline
$K$ & $|\mathcal{Y}|$ & Query & Alg. & Complexity
in $O(-)$&  Section \\
\hline 
1 & 2 & Q1/Q2 & SS & $NM \log NM$ &  3.1.2 \\
$K$ & 2 & Q1 & MM & $NM$ &  3.2 \\
$K$ & $|\mathcal{Y}|$ & Q1/Q2 & SS & $NM (\log(NM) +  K\update{^2} \log N$) &  3.1.3 \\
\hline
\end{tabular}
\vspace{-0.5em}
\caption{Summary of results ($K$ and $|\mathcal{Y}|$ are constants).
}
\label{fig:summary_of_result}
\end{figure}

\section{Efficient Solutions for CP Queries}\label{sec:cc-knn}



Given our definition of certain prediction,
not surprisingly, both queries are  
hard if we do not assume any structure
of the classifier. In this section, we 
focus on a specific classifier that is
popularly used in practice, namely the 
$K$-Nearest Neighbor (KNN) classifier.
As we will see, for a KNN classifier,
we are able to answer both CP queries
in \textit{polynomial} time, even though we are
reasoning over \textit{exponentially} many
possible worlds!

\vspace{0.5em}
\noindent
{\bf \emph{$K$-Nearest Neighbor Classifiers.}} 
A textbook KNN classifier works in the following way, as illustrated in 
Figure~\ref{fig:knn-similarity}(a): Given a training set 
$D = \{(x_i, y_i)\}$ and a test example 
$t$, we first calculate the similarity
between $t$ and each $x_i$: $s_i =  \kappa(x_i, t)$.
This similarity can be calculated 
using different kernel functions $\kappa$
such as linear kernel, RBF kernel, etc.
Given all these similarity scores
$\{s_i\}$, we pick the top $K$ training
examples with the largest similarity score:
$x_{\sigma_1}, ..., x_{\sigma_K}$
along with corresponding labels
$\{y_{\sigma_i}\}_{i\in[K]}$.
We then take the majority 
label among $\{y_{\sigma_i}\}_{i\in[K]}$ and return 
it as the prediction for the test example
$t$.


\vspace{0.5em}
\noindent
{\bf \emph{Summary of Results.}} 
In this paper, we focus on designing
efficient algorithms to support a KNN
classifier for both CP queries.
In general, all these results 
are based on two algorithms,
namely SS (SortScan) and MM (MinMax).
SS is a generic algorithm that
can be used to answer 
both queries, while MM can only be used
to answer $Q1$. However, on the other
hand, MM permits lower complexity 
than SS when applicable.
Figure~\ref{fig:summary_of_result} summarizes the result.

\vspace{0.5em}
\noindent
{\bf \emph{Structure of This Section.}} 
In Section~\ref{sec:ss}
we will focus on the SS 
algorithm as it is more 
generic. We will
explain a simplified version
of the SS algorithm for 
the special case ($K=1$, $|\mathcal{Y}|=2$)
in greater details as it conveys
the intuition behind this algorithm.
We will follow by describing the
SS algorithm in its general form.
We will summarize the
MM algorithm
in Section~\ref{sec:mm}, which
can be significantly more efficient than SS
in some cases, but leave
the full details to the appendix.

\begin{figure}[t!]
    \centering
    \includegraphics[width=0.5\textwidth]{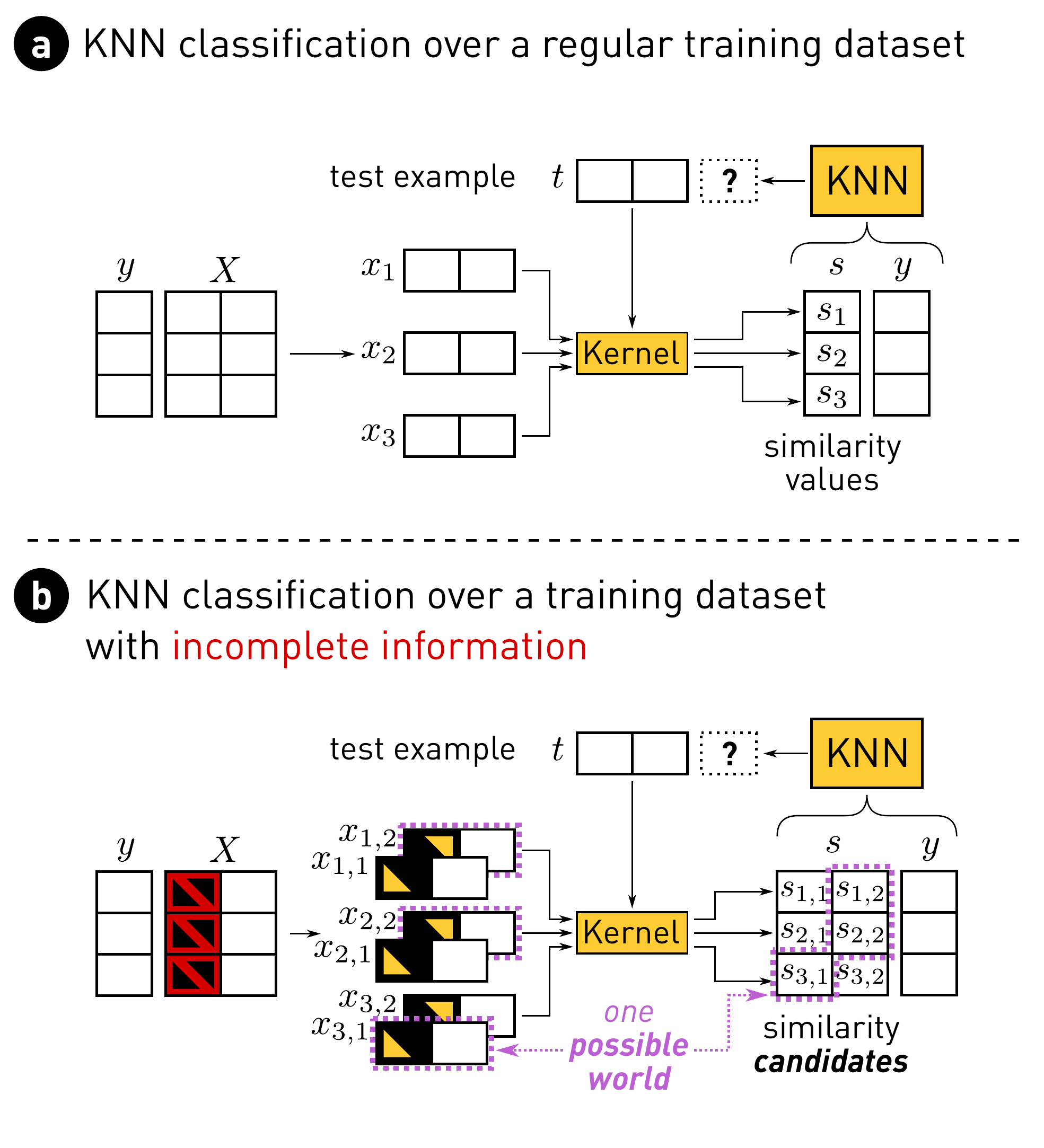}
    \caption{Illustration of KNN classifiers.}
    \label{fig:knn-similarity}
\end{figure}

\subsection{SS Algorithm}\label{sec:ss}

We now describe the SS algorithm.
The idea behind SS is that we can calculate
the similarity between all
candidates $\cup_i \mathcal{C}_i$ in an incomplete 
dataset and a test example $t$.
Without loss of generaility,
assume that $|\mathcal{C}_i| = M$, this leads to $N \times M$
similarity scores $s_{i, j}$.
We can then sort and scan these
similarity scores.

The core of the SS algorithm
is a dynamic programming 
procedure. 
We will first describe 
a set of basic building blocks of this problem, and
then introduce
a simplified version of SS
for the special case of
$K=1$ and $|\mathcal{Y}|=2$, to explain the
intuition behind SS.
We follow this by the general version of the SS algorithm.

\subsubsection{Two Building Blocks} \label{sec:alg:ss:two-building-blocks}

In our problem, we can construct two 
building blocks efficiently. We start by
articulating the settings precisely.
In the next section, we will use these
two building blocks for our SS algorithm.


\paragraph*{Setup} We are given 
an incomplete dataset $\mathcal{D}=\{(\mathcal{C}_i, y_i)\}$.
Without loss of generality, we 
assume that each $\mathcal{C}_i$
only contains $M$ elements, i.e.,
$|\mathcal{C}_i| = M$.
We call $\mathcal{C}_i = \{x_{i,j}\}_{j \in [M]}$ the 
$i^{th}$ incomplete data example, and $x_{i, j}$
the $j^{th}$ candidate value 
for the $i^{th}$ incomplete data example.
This defines $M^N$ many possible worlds:
\[
\mathcal{I}_{\mathcal{D}} = \{D=\{(x_i^D, y_i^D)\}: 
 |D| = |\mathcal{D}| \wedge 
 y_i^D = y_i \wedge x_i^D \in \mathcal{C}_i
 \}.
\]
We use $x_{i, j_{i, D}}$ to denote the candidate value for the $i^{th}$ data point in $D$.
Given a test example $t$, we can calculate the
similarity between each candidate value $x_{i, j}$
and $t$: $s_{i, j} = \kappa(x_{i, j}, t)$. We
call these values \textit{similarity candidates}, as shown in Figure~\ref{fig:knn-similarity} (b). We assume that
there are no ties in these similarities scores (we can always break a tie by favoring a smaller
$i$ and $j$ or a pre-defined random order).

Furthermore, given a candidate value $x_{i, j}$, we 
count, for each candidate set, how many 
candidate values are less similar to the 
test example than $x_{i, j}$. This gives
us what we call the 
{\em similarity tally} $\alpha$.
For each candidate set $\mathcal{C}_n$,
we have
\[
\alpha_{i, j}[n] = \sum\nolimits_{m=1}^M \mathbb{I}[s_{n, m} \leq s_{i, j}].
\] 

\begin{figure}
    \centering
    \includegraphics[width=0.6\textwidth]{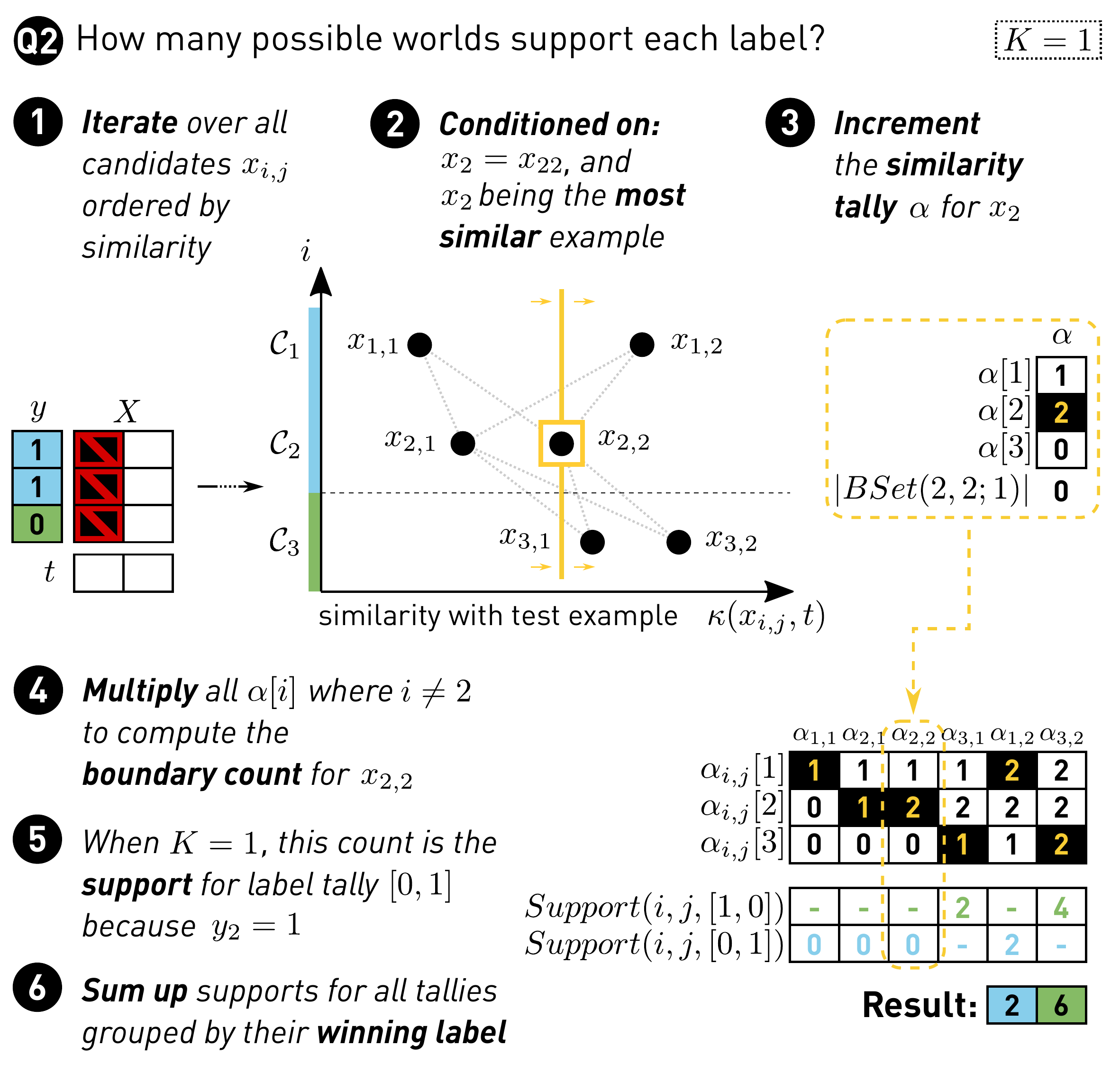}
    \caption{Illustration of SS when $K=1$ for $Q2$.}
    \label{fig:alg-sort-count-k1}
\end{figure}

\begin{example} 
In Figure~\ref{fig:alg-sort-count-k1} we can see an example of a similarity tally $\alpha_{2,2}$ with respect to the data point $x_{2,2}$. For 
$i^{th}$ incomplete
data example, it contains the number of candidate values $x_{i,j} \in \mathcal{C}_i$ that have the similarity value no greater than $s_{2,2}$. Visually, in Figure~\ref{fig:alg-sort-count-k1}, this represents all the candidates that lie left of the vertical yellow line. We can see that only one candidate from $\mathcal{C}_{1}$, two candidates from $\mathcal{C}_{2}$, and none of the candidates from $\mathcal{C}_{3}$ satisfy this property. This gives us $\alpha_{2,2}[1]=1$, $\alpha_{2,2}[2]=2$, and $\alpha_{2,2}[3]=0$.
\end{example}

\paragraph*{KNN over Possible World $D$}
Given one possible world $D$, running a KNN classifier
to get the prediction for a test example $t$ 
involves multiple stages. 
First, we obtain
{\em Top-K Set}, the set of $K$ examples in $D$
that are in the K-nearest neighbor set
\[
Top(K, D, t) \subseteq [N],
\]
which has the following property
\[
|Top(K, D, t)| = K,
\]
\begin{eqnarray*}
\forall i, i' \in [N].~~~i \in Top(K, D, t) \wedge i' \not
\in Top(K, D, t) 
\implies  s_{i, j_{i, D}} > s_{i', j_{i', D}}.
\end{eqnarray*}

Given the top-K set, we then tally the corresponding
labels by counting how many 
examples in the top-K set support a given label.
We call it the {\em label tally} $\gamma^D$:
\[
\gamma^D \in \mathbb{N}^{|\mathcal{Y}|}: \gamma_l^D = 
    \sum\nolimits_{i \in Top(K, D, t)} \mathbb{I}[l = y_i].
\]
Finally, we pick the label with the largest
count:
\[
y^*_D = \arg\max_l \gamma_l^D.
\]

\begin{example}
For $K=1$, the Top-K Set contains only one element $x_i$ which is most similar to $t$. The label tally then is a $|\mathcal{Y}|$-dimensional binary vector with all elements being equal to zero except for the element corresponding to the label $y_i$ being equal to one. Clearly, there are $|\mathcal{Y}|$ possible such label tally vectors.
\end{example}

\paragraph*{Building Block 1: Boundary Set}
The first building block answers the 
following question: {\em 
Out of all possible worlds that picked 
the value $x_{i, j}$ for $\mathcal{C}_i$,
how many of them have $x_{i,j}$ as the
least similar item in the Top-K set?}
We call all possible worlds that satisfy
this condition the {\em Boundary Set}
of $x_{i, j}$:
\begin{equation*}
BSet(i, j; K)  = 
\{D: j_{i, D} = j \wedge i \in Top(K, D, t)
\wedge 
     i \not \in Top(K-1, D, t) \}.  
\end{equation*}
We call the size of the boundary 
set the {\em Boundary Count}.

We can enumerate all $N \choose (K - 1)$ 
possible configurations of the top-(K-1)
set to compute the boundary count.
Specifically, let $\mathcal{S}(K-1, [N])$ be 
all subsets of $[N]$ with size $K-1$. We have 
\[
|BSet(i, j; K)| = \sum_{\substack{S \in \mathcal{S}(K-1, [N]) \\
                i \not \in S}}
    \left(\prod_{n \not \in S} \alpha_{i, j}[n]\right) 
    \cdot 
    \left(\prod_{n \in S} (M - \alpha_{i, j}[n])\right).
\] 

The idea behind this is the following --- we
enumerate all possible settings of the
top-(K-1) set: $\mathcal{S}(K-1, [N])$.
For each specific top-(K-1)
setting $S$, every candidate set in 
$S$ needs to pick a value that is
more similar than $x_{i, j}$, while
every candidate set not in $S$ needs
to pick a value that is less similar
than $x_{i, j}$. Since the choices of
value between different candidate sets
are independent, we can calculate 
this by multiplying different entries of 
the similarity tally vector $\alpha$.

We observe that calculating the 
boundary count for a value $x_{i, j}$
can be efficient when $K$ is small.
For example, if we use a 1-NN
classifier, the only $S$ that we 
consider is the empty set, and thus, 
the boundary count merely
equals $\prod_{n \in [N], n \ne i} \alpha_{i, j}[n]$.

\begin{example}
We can see this, in Figure~\ref{fig:alg-sort-count-k1} from Step 3 to Step 4,  where the size of the boundary set $|BSet(2, 2; 1)|$ is computed as the product over elements of $\alpha$, excluding $\alpha[2]$. Here, the boundary set for $x_{2,2}$ is actually empty. This happens because both candidates from $\mathcal{C}_3$ are more similar to $t$ than $x_{2,2}$ is, that is, $\alpha_{2,2}[3]=0$. Consequently, since every possible world must contain one element from $\mathcal{C}_3$, we can see that $x_{2,2}$ will never be in the Top-$1$, which is why its boundary set contains zero elements.

If we had tried to construct the boundary set for $x_{3,1}$, we would have seen that it contains two possible worlds. One contains $x_{2,1}$ and the other contains $x_{2,2}$, because both are less similar to $t$ than $x_{3,1}$ is, so they cannot interfere with its Top-$1$ position. On the other hand, both possible worlds have to contain $x_{1,1}$ because selecting $x_{1,2}$ would prevent $x_{3,1}$ from being the Top-$1$ example.
\end{example}

\paragraph*{Building Block 2: Label Support}
To get the prediction of a KNN classifier,
we can reason about the label tally vector 
$\gamma$, and not necessarily the specific 
configurations of the top-K set.
It answers the following question:
{\em 
Given a specific configuration of the 
label tally vector $\gamma$, how many
possible worlds in the boundary set of
$x_{i,j}$ support this $\gamma$?
}
We call this the {\em Support} of the
label tally vector $\gamma$:
\[
Support(i, j, \gamma) = |\{D: \gamma^D = \gamma \wedge 
D \in BSet(i, j; K)\}|.
\]

\begin{example}
For example, when $K=3$ and $|\mathcal{Y}|=2$, we have $4$ possible label tallies: $\gamma \in \{ [0,3], [1,2], [2,1], [3,0] \}$. Each tally defines a distinct partition of the boundary set of $x_{i,j}$ and the size of this partition is the support for that tally. Note that one of these tallies 
always has support 0,
which happens when $\gamma_l = 0$ for the label $l=y_i$, thus excluding $x_{i,j}$ from the top-$K$ set. 

For $K=1$, a label tally can only have one non-zero value that is equal to $1$ only for a single label $l$. Therefore, all the elements in the boundary set of $x_{i,j}$ can support only one label tally vector that has $\gamma_l=1$ where $l=y_{i}$. 
This label tally 
vector will always have
the support equal 
to the boundary count of $x_{i,j}$.
\end{example}



Calculating the support can be done with 
dynamic programming. First,
we can partition the whole incomplete
dataset into $|\mathcal{Y}|$ many 
subsets, each of which only contains
incomplete data points
(candidate sets) of the same label $l \in |\mathcal{Y}|$:
\[
\mathcal{D}_l = \{(\mathcal{C}_i, y_i): y_i = l \wedge 
(\mathcal{C}_i, y_i) \in \mathcal{D}\}.
\]
Clearly, if we want a possible world $D$ that
supports the label tally vector $\gamma$, 
its top-K set needs to have 
$\gamma_1$ candidate sets from $\mathcal{D}_1$,
$\gamma_2$ candidate sets from $\mathcal{D}_2$,
and so on.
{\em 
Given that $x_{i, j}$ is on the boundry, how many 
ways do we have to pick 
$\gamma_l$ many candidate sets from $\mathcal{D}_l$
in the top-K set?} We can represent this value
as $C^{i, j}_l(\gamma_l, N)$, with the following 
recursive structure:
\begin{equation*}
C_l^{i, j}(c,n) =
\begin{cases}
C^{i, j}_l(c, n-1), \quad \text{if} \ y_n \neq l, \\
\update{C^{i, j}_l(c-1, n-1), \quad \text{if} \ x_n=x_i}, \text{otherwise} \\
\alpha_{i, j}[n] \cdot C^{i, j}_l(c, n-1) + (M-\alpha_{i, j}[n]) \cdot C^{i, j}_l(c-1,n-1).
\end{cases}    
\end{equation*}
This recursion defines a process in which one
scans all candidate sets from $(\mathcal{C}_1, y_1)$ 
to $(\mathcal{C}_N, y_N)$.
At candidate set $(\mathcal{C}_n, y_n)$:
\begin{enumerate}
\item If $y_n$ is not equal to our target label $l$,
the candidate set $(\mathcal{C}_n, y_n)$ will 
not have any impact on the count.
\item If $x_n$ happens to be $x_i$, this will 
not have any impact on the count as 
$x_i$ is always in the top-K set, by definition. \update{However, this means that we have to decrement the number of available slots $c$.}
\item Otherwise, we have two choices to make:
\begin{enumerate}
\item Put $(\mathcal{C}_n, y_n)$ into the
top-K set, and there are $(M - \alpha_{i, j}[n])$
many possible candidates to choose from.
\item Do not put $(\mathcal{C}_n, y_n)$ into the
top-K set, and there are $\alpha_{i, j}[n]$
many possible candidates to choose from.
\end{enumerate}
\end{enumerate}

It is clear that this recursion can be 
computed as a dynamic program in $\mathcal{O}(N \cdot M)$ time. This DP is defined for $c \in \{0...K\}$ which is the exact number of candidates we want to have in the top-$K$, and $n \in \{1...N\}$ which defines the subset of examples $x_i : i \in \{1...N\}$ we are considering. The boundary conditions of this DP are $C_l^{i, j}(-1,n)=0$ and $C_l^{i, j}(c,\update{0})=1$.

Given the result of this dynamic programming 
algorithm for different values of $l$, we 
can calculate the support of label tally
$\gamma$:
\[
Support(i, j, \gamma) = \prod\nolimits_{l \in \mathcal{Y}}
C_{l}^{i, j}(\gamma_l, N),
\]
which can be computed in $\mathcal{O}(NM|\mathcal{Y}|)$.

\begin{example}
If we assume the situation shown in Figure~\ref{fig:alg-sort-count-k1}, we can try for example to compute the value of $Support(3,1,\gamma)$ where $\gamma=[1,0]$. 
We would have $C_0^{3,1}(1,N)=1$ because $x_{3}$ (the subset of $D$ with label 0) must be in the top-$K$, which happens only when $x_3=x_{3,1}$. On the other hand we would have $C_1^{3,1}(0,N)=2$ because both $x_{1}$ and $x_{2}$ (the subset of $D$ with label 1) must be out of the top-$K$, which happens when $x_1=x_{1,1}$ while $x_2$ can be either  equal to $x_{2,1}$ or $x_{2,2}$. Their mutual product is equal to $2$, which we can see below the tally column under $x_{3,1}$.
\end{example}

\subsubsection{$K=1$, $|\mathcal{Y}|=2$}

Given the above two building blocks, 
it is easy to develop an algorithm 
for the case $K=1$ and $|\mathcal{Y}|=2$.
In SS, we use the result of $Q2$ 
to answer both $Q1$ and $Q2$. Later we will
introduce the MM algorithm that is dedicated to
$Q1$ only.

We simply compute the number of possible 
worlds that support the prediction label being 1.
We do this by enumerating all possible 
candidate values $x_{i, j}$. If this 
candidate has label $y_i = 1$, we count
how many possible worlds have 
$x_{i, j}$ as the top-1 example, i.e.,
the boundry count of $x_{i, j}$.
We have
\[
Q2(\mathcal{D}, t, l) = \sum_{i \in [N]}
\sum_{j \in [M]} \mathbb{I}[y_i = l] \cdot |BSet(i, j; K=1)|, 
\]
which simplifies to 
\[
Q2(\mathcal{D}, t, l) = \sum_{i \in [N]}
\sum_{j \in [M]} \mathbb{I}[y_i = l] \cdot \prod_{n \in [N], n \ne i} \alpha_{i, j}[n].
\]

If we pre-compute the whole
$\alpha$ matrix, it is clear that 
a naive implementation would calculate the above value 
in $\mathcal{O}(N^2M)$.
However,
as we will see later, we can do much better.


\paragraph*{Efficient Implementation}
We can design a much more efficient 
algorithm to calculate this value.
The idea is to first sort all
$x_{i, j}$ pairs by their similarity
to $t$, $s_{i, j}$, from the smallest to the largest, and then scan them in this order.
In this way, we
can incrementally maintain the 
$\alpha_{i, j}$ vector during the scan.

Let $(i, j)$ be the current candidate 
value being scanned, and 
$(i', j')$ be the candidate 
value right before $(i, j)$ in the sort order, we have
\begin{equation}
\label{eqn:alpha}
\begin{split}
\alpha_{i, j}[n] & = \begin{cases}
\alpha_{i', j'}[n] + 1~~~~\text{if}~n = i', \\
\alpha_{i', j'}[n]. 
\end{cases}    
\end{split}
\end{equation}
Therefore, we are able to 
compute, for each $(i, j)$, its
\begin{equation}
\label{eqn:alphaprod}
\prod\nolimits_{n \in [N], n \ne i} \alpha_{i, j}[n]   
\end{equation}
in $\mathcal{O}(1)$ time, {\em without
pre-computing the whole $\alpha$.}
This will give us an algorithm 
with complexity $\mathcal{O}(MN\log MN)$!

\begin{example}
In Figure~\ref{fig:alg-sort-count-k1} we depict exactly this algorithm. We iterate over the candidates $x_{i,j}$ in
an order of increasing similarity with the test example $t$ (Step 1). In each iteration we try to compute the number of possible worlds supporting $x_{i,j}$ to be the top-$1$ data point (Step 2). We update the tally vector $\alpha$ according to Equation~\ref{eqn:alpha} (Step 3) and multiply its elements according to Equation~\ref{eqn:alphaprod} (Step 4) to obtain the boundary cont. Since $K=1$, the label support for the label $l=y_i$ is trivially equal to the boundary count and zero for $l \neq y_i$ (Step 5). We can see that the label $0$ is supported by $2$ possible worlds when $x_3=x_{3,1}$ and $4$ possible worlds when $x_3=x_{3,2}$. On the other hand, label $1$ has non-zero support only when $x_1=x_{1,2}$. Finally, the number of possible worlds that will predict label $l$ is obtained by summing up all the label supports in each iteration where $l=y_i$ (Step 6). For label $0$ this number is $2+4=6$, and for label $1$ it is $0+0+0+2=2$.
\end{example}


\subsubsection{$K \geq 1$, $|\mathcal{Y}| \geq 2$}

In the general case, the algorithm follows a 
similar intuition as the case of
$K = 1$ and $|\mathcal{Y}| = 2$. 
We enumerate each possible candidate value
$x_{i, j}$. For each candidate value,
we enumerate all possible values of the
label tally vector; for each such vector,
we compute its support. 
Let $\Gamma$ be the set of all possible 
label tally vectors, we have
\[
Q2(\mathcal{D}, t, l) =
   \sum_{i \in [N]} \sum_{j \in [M]}
   \sum_{\gamma \in \Gamma}
     \mathbb{I}[l = \argmax(\gamma)] 
     \cdot    
     Support (i, j, \gamma).
\]
We know that there are $|\Gamma| = \mathcal{O}\left({|\mathcal{Y}|+K-1 \choose K}\right)$
many possible configurations of the label 
tally vector, and for each of them,
we can compute the support 
$Support (i, j, \gamma)$ in $\mathcal{O}(NM|\mathcal{Y}|)$
time. As a result, a naive implementation of
the above algorithm would take 
$\mathcal{O}(N^2M^2|\mathcal{Y}|{|\mathcal{Y}|+K-1 \choose K})$ time.

\begin{algorithm}[t!]
\small
\caption{Algorithm \textbf{SS} for Answering Q2 with $K$-NN.} \label{alg:sort-count:dp:binary}
\label{alg:q2}
\begin{algorithmic}[1]
\REQUIRE {$\mathcal{D}$, incomplete dataset; $t$, target data point.}
\ENSURE {$r$, integer vector, s.t. $r[y]=Q2(\mathcal{D}, t, y), \forall y \in \mathcal{Y}$.}

\STATE $s \gets \mathtt{kernel} (\mathcal{D}, t)$;
\STATE $\alpha \gets \emph{zeros}(|\mathcal{D}|)$;
\STATE $r \gets \emph{zeros}(|\mathcal{Y}|)$;
\FORALL{$(i,j) \in \mathtt{argsort}(s)$}
    \STATE $\alpha[i] \gets \alpha[i] + 1$; ~~~~~// (See Equation~(\ref{eqn:alpha}))
    
    \FORALL{$l \in \mathcal{Y}$}
        \FORALL{$k \in [K]$}
        \STATE Compute $C^{i, j}_l(k, N)$.
        \ENDFOR
    \ENDFOR
    
    \FORALL{possible valid tally vectors $\gamma \in \Gamma$}
        \STATE $y_p \gets \mathtt{argmax} (\gamma)$;
        \STATE Compute 
        $Support(i, j, \gamma) = \mathbb{I}[\gamma_{y_i} \geq 1] \cdot \prod_{l \in \mathcal{Y}}C^{i, j}_l(\gamma_l, N)$;
        \STATE $r[y_p] \gets r[y_p] + Support(i, j, \gamma)$;
    \ENDFOR

\ENDFOR

\RETURN $r$;
\end{algorithmic}
\end{algorithm}

\paragraph*{Efficient Implementation}
We can implement the above procedure in a 
more efficient way, as illustrated in 
Algorithm~\ref{alg:q2}.
Similar to the case of $K=1$, we
iterate over all values $x_{i, j}$ in 
the order of increasing similarity (line 4). 
This way, we are able to maintain, efficiently,
the similarity tally vector $\alpha_{i, j}$ 
(line 5).
We then pre-compute the result of 
$K|\mathcal{Y}|$ many dynamic programming
procedures (lines 6-8), which will be used to
compute the support for each possible
tally vector later. 
We iterate over all valid label tally vectors, where a valid tally vector 
$\gamma \in \Gamma$ contains all integer vectors 
whose entries sum up to $K$ (line 9).
For each tally vector, we
get its prediction $y_p$ (line 10).
We then calculate its support (line 11) and
add it to the number of possible worlds
with $y_p$ as the prediction (line 12).

\vspace{0.5em}
\noindent
\textbf{(Complexity)}
We analyze the complexity of Algorithm~\ref{alg:q2}:
\begin{itemize}
\item The sorting procedure requires $\mathcal{O} (N \cdot M \log N \cdot M)$ steps as it sorts all elements of $S$.
\item The outer loop iterates over $\mathcal{O} (N \cdot M)$ elements.
\item In each inner iteration, we need to compute $|\mathcal{Y}|$
sets of dynamic programs, each of which
has a combined state space of size $N \cdot K$.
\item Furthermore, in each iteration, we iterate over all possible label assignments, which requires $\mathcal{O}\left({|\mathcal{Y}|+K-1 \choose K}\right)$ operations. 
\item For each label assignment, we need $\mathcal{O}(|\mathcal{Y}|)$ multiplications.
\end{itemize}
The time complexity is therefore the sum of $\mathcal{O} (N \cdot M \log (N \cdot M))$ and $\mathcal{O} (N \cdot M \cdot (N \cdot K + |\mathcal{Y}| + {|\mathcal{Y}|+K-1 \choose K}))$. 

\paragraph*{Further Optimizations}
We can make this even faster by observing that: (1) all the states relevant for each iteration of the outer loop are stored in $\alpha$, and (2) between two iterations, only one element of $\alpha$ is updated. We can take advantage of these observations to reduce the cost of computing the dynamic program by employing 
divide-and-conquer. We recursively divide the elements of $\alpha$ into two subsets and maintain the DP result for each subset. The joint result for the two subsets is obtained by a simple sum-of-products formula with $O(K)$ complexity. We can see that this enables us to maintain a binary tree structure of DP results and in each iteration we need to update $O(\log N)$ elements. This enables us to compute the dynamic program in $O(K \log N)$ instead of $O(KN)$ time, which renders the overall complexity as $\mathcal{O} (N \cdot M \cdot (\log(N\cdot M) +  K\update{^2} \cdot \log N ))$. 
We leave the details for the appendix.

\subsection{MM Algorithm}\label{sec:mm}

One can do significantly better for $Q_1$ in certain cases.
Instead of using the SS algorithm, we can develop an algorithm that deals with the binary 
classification case ($|\mathcal{Y}|=2$) with time complexity
$\mathcal{O} \left(N \cdot M + (N \log K + K) \right)$. 

This algorithm, illustrated in Figure~\ref{fig:alg-min-max-k1}, relies on a key observation that for each label $l$, we can greedily construct a possible world that has the best chance of predicting label $l$. We call this possible world the $l$-extreme world and construct it by selecting from each candidate set $\mathcal{C}_i$ either the candidate most similar to the test example $t$ when $y_i=l$, or the candidate least similar to $t$ when $y_i \neq l$. We can show
that the $l$-extreme world predicts label $l$
if and only if there exists a possible world that predicts label $l$.
This means we can use it as a condition for checking the possibility of predicting label $l$. Since the construction of the $l$-extreme world can be done in $\mathcal{O} \left(N \cdot M \right)$ time, this leads us to a efficient algorithm for $Q1$.

\begin{figure}
    \centering
    \includegraphics[width=0.7\textwidth]{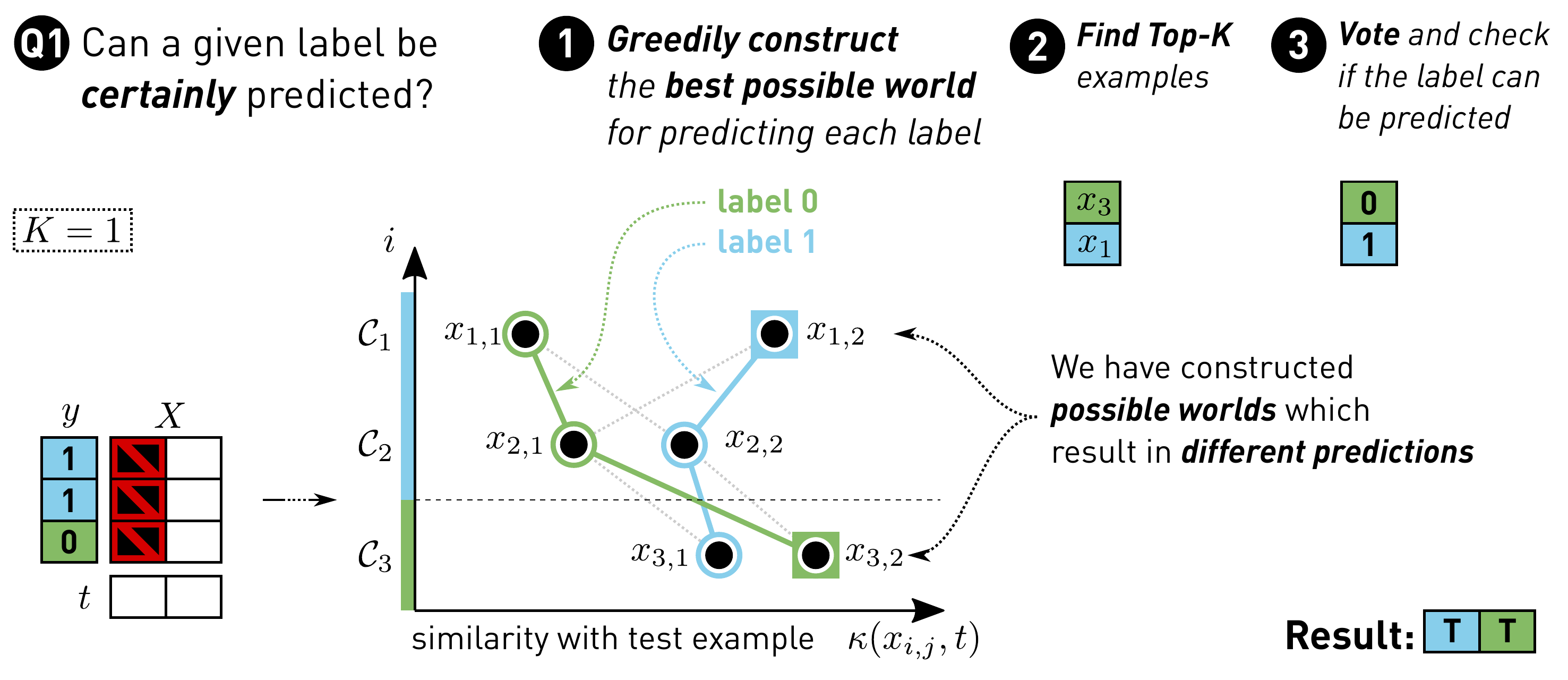}
    \caption{Illustration of MM when $K=1$ for $Q1$.}
    \label{fig:alg-min-max-k1}
\end{figure}


We first
describe the key idea behind 
the MM algorithm, and then 
describe the MM algorithm
which is listed in Algorithm~\ref{alg:min-max:k-nn}.

\paragraph*{Key Idea}
For \textit{binary classification} ($|\mathcal{Y}| = 2$), we have the following
observation --- given a possible world
$D = \{(x_{i, j_{i, D}}, y_i)\}$
that produces prediction $l \in \mathcal{Y}$
with a top-K set $Top(K, D, t)$,
consider a different possible 
world, which we call the
{\em $l$-extreme world} of $D$
as $E_{l, D}$. In $E_{l, D}$,
we replace,
for all candidate sets with 
$y_i = l$, the 
$x_{i, j_{i, D}}$ candidate in 
$D$ with the candidate in the candidate 
set $\mathcal{C}_i$ that is most 
similar to the test example
\[
j_{i, E_{l, D}} = \arg\max_j \kappa(x_{i, j}, t)
\]
and replace, for all candidate sets
with $y_i \ne l$,
the 
$x_{i, j_{i, D}}$ candidate in 
$D$ with the candidate in the candidate 
set $\mathcal{C}_i$ that is least 
similar to the test example
\[
j_{i, E_{l, D}} = \arg\min_j \kappa(x_{i, j}, t).
\]
We have
\[
D~\text{predicts}~l \implies E_{l, D}~\text{predicts}~l.
\]
To see why, note that (1) 
replacing all candidate values for
candidate set whose label $y_i \ne l$
by something less similar to the
test example $t$ will only make it
more likely to predict $l$; (2) 
replacing all candidate
values for candidate set whose label $y_i = l$
by something more similar to $t$ will only make it
more likely to predict $l$.

Another powerful observation is that
for all possible worlds $D$ they all 
have the \textit{same} $l$-extreme
worlds $E_{l, D}$ since the construction
of the latter only relies on the most
and least similar items in each candidate
sets. We can then write 
$E_l$ as the $l$-extreme world
for all possible world $D$.
We now have
\[
\exists D.~D~\text{predicts}~l \implies E_{l}~\text{predicts}~l,
\]
and, trivially
\[
E_{l}~\text{predicts}~l  \implies \exists D.~D~\text{predicts}~l, 
\]
by simply taking $D = E_{l}$. As a result,
\[
\exists D.~D~\text{predicts}~l \Leftrightarrow E_{l}~\text{predicts}~l.
\]
One can use this observation to check whether 
$Q1(\mathcal{D}, t, l)$ evaluates to true:
this is equivalent to checking 
whether there exists any possible world
$D$ that predicts a label $l' \ne l$.
To achieve this, we can simply check
the $l'$- extreme world $E_{l'}$.

\begin{algorithm}[t!]
\small
\caption{Algorithm \textbf{MM} for answering Q1 with $K$-NN.} \label{alg:min-max:k-nn}
\begin{algorithmic}[1]
\REQUIRE {$\mathcal{D}$, incomplete dataset; $t$, target data point.}
\ENSURE {$r$, Boolean vector, s.t. $r[y]=Q1(\mathcal{D}, t, y), \forall y \in \mathcal{Y}$.}

\STATE $S \gets \mathtt{kernel} (\mathcal{D}, x_t)$

\FORALL{$i \in 1,...,|\mathcal{D}|$}
    \STATE $s_i^{\min} \gets \min\{S_{i,j}\}_{j=1}^M$, $s_i^{\max} \gets \max\{S_{i,j}\}_{j=1}^M$;
\ENDFOR

\FORALL{$l\in\mathcal{Y}$}
    \STATE $s \gets \mathtt{zeros} ( |\mathcal{D}|)$;
    \FORALL{$i \in 1,...,|\mathcal{D}|$}
        \STATE $s[i]$ $\gets$ $s_i^{\max}$ $\mathbf{if}$ $(y_i = l)$ $\mathbf{else}$ $s_i^{\min}$;
    \ENDFOR
    
    \STATE $I_K \gets \mathtt{argmax\_k} (s, K)$;
    \STATE $v \gets \mathtt{vote}( \{  y_i : i \in I_K  \} ) $;
    \IF{$\mathtt{argmax} (v) = l$}
        \STATE $r[l] \gets \mathtt{true}$;
    \ELSE
        \STATE $r[l] \gets \mathtt{false}$;
    \ENDIF
\ENDFOR

\RETURN $r$;

\end{algorithmic}
\end{algorithm}




    



\paragraph*{Proof in Appendix} 
This idea might look simple
and natural, however, a formal proof 
is actually quite 
engaged (e.g., without a formal proof, it is not 
immediately clear why this algorithm cannot
handle cases in which $|\mathcal{Y}| > 2$).
We leave the full, formal proof to the
appendix of this paper.

\paragraph*{Efficient Algorithm} 
The above intuition gives us a very efficient 
algorithm to answer the query $Q1(\mathcal{D}, t, l)$,
as illustrated in Algorithm~\ref{alg:min-max:k-nn}.
We first calculate 
the similarity matrix (line 1), compute the extreme similarities that we use later (lines 2-3), and then
try to construct the $l$-extreme world
for each $l \in \mathcal{Y}$ (lines 4-7).
We then calculate the top-K set
of the $l$-extreme world (line 8)
and tally the labels to get the prediction
in the $l$-extreme world (line 9).

To answer the query $Q1(\mathcal{D}, t, l)$
(lines 10-13),
we check all $l'$-extreme worlds where $l' \ne l$
to see if any of these $l'$-extreme worlds 
predicts their corresponding $l'$. If yes,
then $Q1(\mathcal{D}, t, l) = \texttt{false}$;
otherwise, $Q1(\mathcal{D}, t, l) = \texttt{true}$.

\vspace{0.5em}
\noindent
\textbf{(Complexity)}
We analyze the complexity of Algorithm~\ref{alg:min-max:k-nn} as follows:
\begin{itemize}
\item We first precompute the similarity matrix, as well as the minimum and maximum similarities, both of which can be done in $\mathcal{O}(N \cdot M)$ time.
\item The outer loop is executed $|\mathcal{Y}|$ times.
\item The optimal world construction loop (lines 6-7) is executed $N$ times. In each iteration we retrieve the precomputed maximal or minimal values.
\item The $\mathtt{argmax\_k}$ function implemented as a heap requires $\mathcal{O} (N \log K)$ steps.
\item The $\mathtt{vote}$ function requires $\mathcal{O}(K)$ steps. The $\mathtt{argmax}$ takes $\mathcal{O}(|\mathcal{Y}|)$, although these two steps can be implemented jointly and run in $\mathcal{O}(K)$ time.
\end{itemize}
The time complexity is therefore $\mathcal{O} \left(N \cdot M + |\mathcal{Y}| \cdot (N \log K + K) \right)$.

\section{Application: Data Cleaning for ML}\label{sec:cc:general}

In this section, we show how to use the proposed CP framework to design an effective data cleaning solution, called \at{CPClean}, for the important application of data cleaning for ML. We assume as input a dirty training set $\mathcal{D}_{train}$ with unknown ground truth $D^*$ among all
possible worlds $\mathcal{I}_\mathcal{D}$. 
Our goal is to select a version $D$ from $\mathcal{I}_\mathcal{D}$, such that
the classifier trained on 
$\mathcal{A}_D$ has the 
same validation accuracy as the classifier trained on the ground
truth world $\mathcal{A}_{D^*}$. 

\stitle{Cleaning Model.} 
Given a dirty dataset $\mathcal{D} = \{(\mathcal{C}_i, y_i)\}_{i\in[N]}$,
in this paper, we focus on the 
scenario in which 
the candidate set $\mathcal{C}_i$
for each data example
is created by automatic data 
cleaning algorithms or
a predefined noise model. For each 
uncertain data example $\mathcal{C}_i$,
we can ask a human 
to provide its true value $x_i^* \in \mathcal{C}_i$. Our goal is 
to find a good strategy to prioritize 
which dirty examples to be cleaned. That is, a cleaning strategy of
$T$ steps can be defined as
\[
\pi \in [N]^T,
\]
which means that in the first
iteration, we clean the example 
$\pi_1$ (by querying human to obtain the ground
 truth value of $\mathcal{C}_{\pi_1}$;
 in the second iteration, we
 clean the example 
$\pi_2$; and so on.
Applying a cleaning strategy $\pi$ will generate  a \textit{partially
cleaned} dataset $\mathcal{D}_{\pi}$
in which all cleaned candidate
sets $\mathcal{C}_{\pi_i}$
are replaced by $\{x_{\pi_i}^*\}$.


\stitle{Formal Cleaning Problem Formulation.} The question we need to address is \textit{"What is a successful cleaning
strategy?"} Given a validation set
$D_{val}$, the view of \at{CPClean} 
is that a successful cleaning strategy
$\pi$ should be the one that 
produces a partially cleaned 
dataset $\mathcal{D}_\pi$ in which 
all validation examples $t \in D_{val}$
can be certainly predicted.
In this case, picking any
possible world defined by 
$\mathcal{D}_\pi$, i.e., $\mathcal{I}_{\mathcal{D}_\pi}$,
will give us a dataset that
has the same accuracy, on the
validation set, as the ground truth world
$D^*$. This can be defined
 precisely as follows.

We treat each candidate set
$\mathcal{C}_i$ as a random variable
$\mathbf{c}_i$, taking values in 
$\{x_{i, 1}, ..., x_{i, M}\}$. 
We write $\mathbf{D} = \{
(\mathbf{c}_i, y_i)\}_{i \in [N]}$.
Given a cleaning strategy $\pi$
we can define the conditional entropy
of the classifier prediction 
on the validation set as

\begin{equation}
\mathcal{H}(
\mathcal{A}_{\mathbf{D}}(D_{val})| 
\mathbf{c}_{\pi_1}, ...,
\mathbf{c}_{\pi_T}
)
\coloneqq 
\frac{1}{|D_{val}|}\sum_{t \in D_{val}}
\mathcal{H}(
\mathcal{A}_{\mathbf{D}}(t)| 
\mathbf{c}_{\pi_1} , ...,
\mathbf{c}_{\pi_T} 
).
\label{eqn:conditional_entropy}
\end{equation}

Naturally, this gives us a principled
objective for finding a ``good''
cleaning strategy that minimizes the human cleaning effort: 
\begin{eqnarray*}
& \min_\pi & \text{dim}(\pi)\\
& s.t.,  &
\mathcal{H}(
\mathcal{A}_{\mathbf{D}}(D_{val})| 
\mathbf{c}_{\pi_1} = x_{\pi_1}^*, ...,
\mathbf{c}_{\pi_T} = x_{\pi_T}^*
) = 0.
\end{eqnarray*}
If we are able to find a cleaning 
strategy in which 
\[
\mathcal{H}(
\mathcal{A}_{\mathbf{D}}(D_{val})| 
\mathbf{c}_{\pi_1} = x_{\pi_1}^*, ...,
\mathbf{c}_{\pi_T} = x_{\pi_T}^*
) = 0,
\]
we know that this 
strategy would produce a partially 
cleaned dataset $\mathcal{D}_{\pi}$
on which all validation examples
can be CP'ed.
Note  that we can use 
the query $Q2$ to compute this conditional 
entropy:
\begin{equation*}
\mathcal{H}(
\mathcal{A}_{\mathbf{D}}(t)| 
\mathbf{c}_{\pi_1} = x_{\pi_1}^*, ...,
\mathbf{c}_{\pi_T} = x_{\pi_T}^*
)
~ =
-
\sum_{l \in \mathcal{Y}}
\frac{Q2(\mathcal{D}_\pi, t, y)}{|\mathcal{D}_\pi|} \log \frac{Q2(\mathcal{D}_\pi, t, y)}{|\mathcal{D}_\pi|}
\end{equation*}

\vspace{0.5em}
\noindent
{\bf Connections to ActiveClean.} 
The idea of prioritizing 
human cleaning effort for
downstream ML models is not new ---
ActiveClean~\cite{krishnan2016activeclean} explores
an idea with a similar goal. 
However, there are some
important differences 
between our framework and 
ActiveClean. The most 
crucial one is that 
our framework relies on 
\textit{consistency} of predictions
instead of the \textit{gradient},
and therefore, we do not
need labels for the
validation set and our algorithm can be used 
in ML models that cannot be 
trained by gradient-based
methods. The KNN classifier
is one such example. 
Since both frameworks 
essentially measure some
notion of ``local sensitivity,''
it is interesting future work 
to understand how to combine them.

\subsection{The CPClean Algorithm} 
\label{sec:seq-info-max}

Finding the solution to the
above objective is, not
surprisingly, NP-hard~\cite{DBLP:journals/ior/KoLQ95}. In this paper, 
we take the view of 
sequential information maximization introduced by~\cite{Chen2015SequentialIM}
and \update{adapt} the \update{respective} greedy algorithm
for this problem. We first describe 
the algorithm, and then \update{review}
the theoretical analysis of its behavior.

\stitle{Principle: Sequential Information Maximization.}
Our goal is to find a cleaning
strategy that \textit{minimizes} the 
\textit{conditional entropy} as fast 
as possible. An equivalent view of 
this is to find a cleaning
strategy that \textit{maximizes}
the \textit{mutual information}
as fast as possible. 
While we use the view of minimizing conditional entropy in implementing the CPClean algorithm, the equivalent view of maximizing mutual information will be useful in analyzing theoretical guarantees about CPClean.

Given 
the current $T$-step cleaning
strategy $\pi_1, ... ,\pi_T$, our goal 
is to greedily find the \textit{next}
data example to clean $\pi_{T+1} \in [N]$
that minimizes the
entropy conditioned on the partial observation as fast as possible:
\[
\pi_{T+1} = \arg\min_{i \in [N]}
\mathcal{H}(
\mathcal{A}_{\mathbf{D}}(D_{val})| 
\mathbf{c}_{\pi_1} = x_{\pi_1}^*, ...,
\mathbf{c}_{\pi_T} = x_{\pi_T}^*,
\mathbf{c}_{i} = x_{i}^*).
\]

\noindent
{\bf Practical Estimation.} The question thus becomes how to estimate
\[
\mathcal{H}(
\mathcal{A}_{\mathbf{D}}(D_{val})| 
\mathbf{c}_{\pi_1} = x_{\pi_1}^*, ...,
\mathbf{c}_{\pi_T} = x_{\pi_T}^*,
\mathbf{c}_{i} = x_{i}^*) ?
\]
The challenge is that when we are trying to decide which example to clean, we do not know the ground truth for item $i$, $x_i^*$. As a result, we need to
assume some priors on how likely
each candidate value $x_{i,j}$ 
is the ground truth $x_i^*$.
In practice, we find that a 
uniform prior already works well;
this leads to the following 
expected value:
\begin{align}
\frac{1}{M} \sum_{j \in [M]}
\mathcal{H}(
\mathcal{A}_{\mathbf{D}}(D_{val})| 
\mathbf{c}_{\pi_1} = x_{\pi_1}^*, ...,
\mathbf{c}_{\pi_T} = x_{\pi_T}^*,
\mathbf{c}_{i} = x_{i, j}).
\label{eqn:expected_entropy}
\end{align}
The above term can thus be calculated by invoking the $Q2$ query.

\begin{algorithm}[t!]
\small
\caption{Algorithm \at{CPClean}.} \label{alg:ccclean}
\begin{algorithmic}[1]
\REQUIRE {$\mathcal{D}$, incomplete training set;  $D_{val}$, validation set.}
\ENSURE {$D$, a dataset in 
$\mathcal{I_{\mathcal{D}}}$ s.t. $\mathcal{A}_{D}$ and $\mathcal{A}_{D^*}$ have same validation accuracy}
\STATE $\pi \gets [ ]$

\FOR{$T = 0$ to $N-1$}

\IF{$D_{val}$ all CP'ed}
\STATE \textbf{break}
\ENDIF
    \STATE $min\_entropy \gets \infty$
    \FORALL{$i \in [N] \backslash \pi $}
    \STATE $entropy = \frac{1}{M} \sum\limits_{j \in [M] }  \mathcal{H}(
\mathcal{A}_{\mathbf{D}}(D_{val})| 
\mathbf{c}_{\pi_1} = x_{\pi_1}^*, ..., \mathbf{c}_{\pi_{T}} = x_{\pi_{T}}^*,\mathbf{c}_{i} = x_{i, j})$ 
    %
    \IF{$entropy < min\_entropy$}
        \STATE $\pi_{T+1} \gets i$, $min\_entropy \gets entropy$
    \ENDIF
    \ENDFOR
    \STATE $\mathbf{x}_{\pi_{T+1}}^* \gets$ obtain the ground truth of $\mathcal{C}_{\pi_{T+1}}$ by human

\ENDFOR
\RETURN Any world $D \in \mathcal{I}_{\mathcal{D}_{\pi}}$ 

\end{algorithmic}
\end{algorithm}

\begin{figure}[t!]
    \centering
    \includegraphics[width=0.5\textwidth]{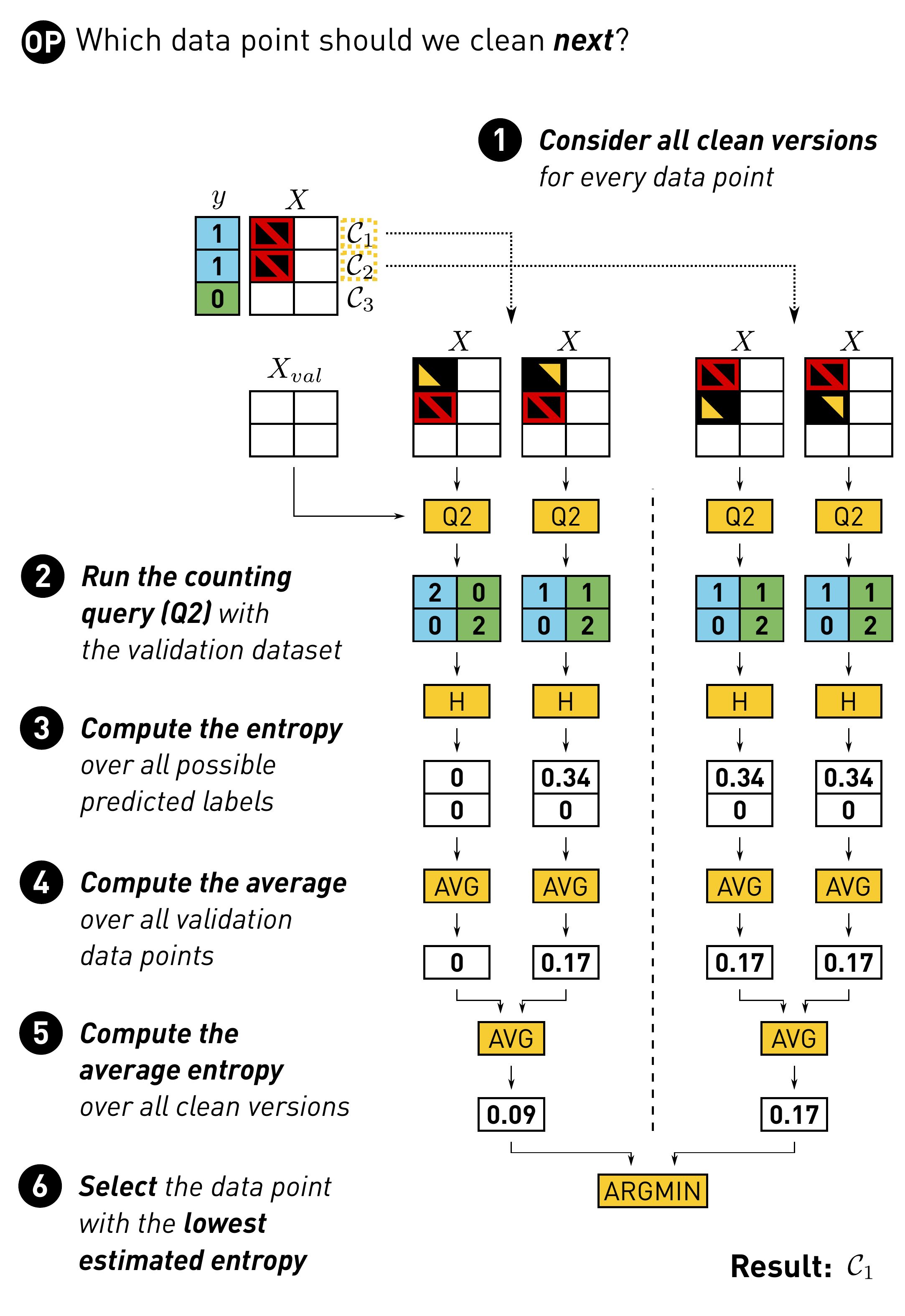}
    \caption{CPClean via sequential info. maximization.}
    \label{fig:seq-info-max}
\end{figure}

\vspace{0.5em}
\noindent
{\bf CPClean.} The pseudocode for \at{CPClean} is shown in Algorithm \ref{alg:ccclean}. The algorithm starts with an empty cleaning strategy (line 1). In each iteration, given the current cleaning strategy $\pi_1, ..., \pi_{T}$, we compute the expected value of entropy conditioned on cleaning one extra training example (lines 6-7). We select the next example to clean $\pi_{T+1}$ that minimizes the entropy (lines 8-9). We then ask a human to clean the selected example (line 10). The greedy algorithm terminates when all validation examples become CP'ed (line 3). Finally, we return any world $D$ among all possible partially cleaned worlds $\mathcal{I}_{\mathcal{D}_{\pi}}$ (line 12). Since all the validation examples are CP'ed with $\mathcal{I}_{\mathcal{D}_{\pi}}$, classifier trained on any world in $\mathcal{I}_{\mathcal{D}_{\pi}}$, including the unknown ground truth world $D^*$, has the same validation accuracy. Therefore, $\mathcal{A}_{D}$ has the same validation accuracy as $\mathcal{A}_{D^*}$.


\begin{example}  Figure~\ref{fig:seq-info-max} shows an example of how \at{CPClean} selects the next data example to clean in each iteration via sequential information maximization. Assume there are two dirty examples, $\mathcal{C}_1$ and $\mathcal{C}_2$, in the training set and each example has two candidate repairs. Therefore, there are four possible clean versions after cleaning the next data point, based on which data point is selected to be cleaned and which candidate repair is the ground truth. For example, the first table at step 1 shows the clean version after cleaning $
\mathcal{C}_1$ if $x_{1, 1}$ is the ground truth. Assume that we have two validation examples. We run the counting query (Q2) on each possible version w.r.t. each validation example as shown in step 2. Then we can compute the entropy of predictions on validation examples as shown in step 3 and 4. The results show that if $\mathcal{C}_1$ is selected to be cleaned, the entropy may become $0$ or $0.17$ depending on which candidate repair is the ground truth. We assume that each of the two candidate repairs has 50\% chance to be the ground truth. Therefore, the expected entropy after cleaning $\mathcal{C}_1$ is $(0 + 0.17) /2 = 0.09$ (step 5). Similarly, we compute the expected entropy after cleaning $\mathcal{C}_2$ as $0.17$. Since $\mathcal{C}_1$ has a lower expected entropy, we select $\mathcal{C}_1$ to clean. 



\end{example}

\noindent
{\bf Complexity of CPClean.} In each iteration of Algorithm~\ref{alg:ccclean}, we need to (1) automatically select a tuple; and (2) ask human to clean the selected tuple. To select a tuple, we need to first check  whether $|D_{val}|$ are all CP'ed (line 3), which invokes the $Q1$ query $O(|D_{val}|)$ times. If not all $D_{val}$ are CP'ed, we need to compute expected value of entropy $O(N)$ times (line 6). Computing the expected value of entropy (line 7) needs to invoke the $Q2$ query $O(M|D_{val}|)$ times. Therefore, when the downstream ML model is KNN, using our SS algorithm for $Q1$ and $Q2$, the complexity for selecting a tuple at each iteration is $O(N^2M^2|D_{val}| \times(\log (MN) + K\log N))$. The quadratic complexity in tuple selection is acceptable in practice, since  human involvement is generally considered to be the most time consuming part in practical data cleaning~\cite{DBLP:books/acm/IlyasC19}.

\vspace{0.5em}
\noindent
{\bf Theoretical Guarantee.} The theoretical analysis of this algorithm, while resembling that of~\cite{Chen2015SequentialIM}, is non-trivial. We provide the main \update{theoretical analysis} here and leave the proof to the appendix.

\begin{corollary}\label{Corollary: theoretical guarantee of cleaning}
Let the optimal cleaning policy that minimizes the cleaning effort while consistently classifying the test examples be denoted by $D_{\text{Opt}}\subseteq D_{train}$ with limited cardinality $t$,
such that 
$$D_{\text{Opt}} =  \argmax_{D_{\pi} \subseteq \mathcal{D}_{train}, \ |D_{\pi}|\leq t}  I(\mathcal{A}_{\mathbf{D}}(D_{val});
D_{\pi}).$$ 
The sequential information maximization strategy 
follows a near optimal strategy where the information gathering satisfies
\begin{equation*}
\begin{split}
I(\mathcal{A}_{\mathbf{D}}(D_{val});
\mathbf{c}_{\pi_1}, ...,
\mathbf{c}_{\pi_T})
 \geq I(\mathcal{A}_{\mathbf{D}}(D_{val});
D_{\text{Opt}})(1-\exp{(-{T}/{\theta t'})})
\end{split}
\end{equation*}
where 
\begin{align*}
 &\theta=\left(\max_{v\in \mathcal{D}_{\text{train}}}I(\mathcal{A}_{\mathbf{D}}(D_{val}); v)\right)^{-1}
\\
t'=t\min\vspace{-1em}\{&\log |\mathcal{Y}|, \log M \}, \ \ \mathcal{Y}:\text{label space}, \ M: |\mathcal{C}_i|. 
\end{align*}
\end{corollary}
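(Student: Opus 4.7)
The plan is to reduce the claim to the generic sequential information maximization guarantee of Chen et al.~\cite{Chen2015SequentialIM} by verifying that CPClean's greedy rule implements the required maximization and that the set function $f(S) := I(\mathcal{A}_{\mathbf{D}}(D_{val}); \{\mathbf{c}_i\}_{i \in S})$ has the structural properties needed for the $(1-e^{-T/(\theta t')})$ bound.

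First I would recast the algorithm. By the chain rule $H(Y) - H(Y \mid X_{1:T+1}) = I(Y; X_{1:T+1})$ and since the history $\mathbf{c}_{\pi_{1:T}}$ is fixed when choosing $\pi_{T+1}$, minimizing the conditional entropy in Equation~(\ref{eqn:conditional_entropy}) is equivalent to maximizing the conditional mutual information $I(\mathcal{A}_{\mathbf{D}}(D_{val}); \mathbf{c}_{\pi_{T+1}} \mid \mathbf{c}_{\pi_1}, \ldots, \mathbf{c}_{\pi_T})$. Hence CPClean is literally the greedy rule for $f$. Next I would verify that $f$ is monotone (this is the ``information never hurts'' inequality, a direct consequence of non-negativity of conditional mutual information) so that adding further observations to either the greedy trajectory or the optimum can only increase $f$.

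Second, I would control the two quantities that appear in the bound. The per-step cap $\theta^{-1} = \max_{v \in \mathcal{D}_{train}} I(\mathcal{A}_{\mathbf{D}}(D_{val}); v)$ is an explicit upper bound on any single marginal gain, and substitutes for the submodularity constant in the classical greedy analysis. The scale factor $t'$ should be obtained by bounding the optimum: $f(D_{\text{Opt}}) \leq I(\mathcal{A}_{\mathbf{D}}(D_{val}); D_{\text{Opt}}) \leq \min\{H(\mathcal{A}_{\mathbf{D}}(D_{val})), H(D_{\text{Opt}})\}$. The first term is at most $\log |\mathcal{Y}|$ by the label-space entropy bound, while the second is at most $|D_{\text{Opt}}| \log M = t \log M$ by subadditivity and the fact that each candidate set has $M$ elements. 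Taking the minimum and multiplying by $t$ (to match the per-validation-example averaging in Equation~(\ref{eqn:conditional_entropy})) yields the stated $t' = t \min\{\log |\mathcal{Y}|, \log M\}$.

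Third, I would invoke the core recursion from Chen et al.~\cite{Chen2015SequentialIM}: at each step $T$, the greedy marginal gain is at least $\tfrac{1}{\theta t'}(f(D_{\text{Opt}}) - f(\pi_{1:T}))$, because otherwise summing up the residual gains across the $t$ elements of $D_{\text{Opt}}$ would contradict either the per-step cap $\theta^{-1}$ or the optimum's total value $f(D_{\text{Opt}})$. Unrolling this one-step contraction gives $f(D_{\text{Opt}}) - f(\pi_{1:T}) \leq f(D_{\text{Opt}})(1 - \tfrac{1}{\theta t'})^T \leq f(D_{\text{Opt}}) \exp(-T/(\theta t'))$, which rearranges to the claimed inequality.

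The main obstacle is the second step above: mutual information is \emph{not} submodular in general, so the cleanest version of the argument has to carefully use the cap $\theta^{-1}$ rather than a classical submodularity constant. Concretely, the delicate point is showing that the greedy gain dominates the averaged residual $(f(D_{\text{Opt}}) - f(\pi_{1:T}))/(\theta t')$ without appealing to submodularity; this is exactly the technical content of the sequential information maximization lemma in~\cite{Chen2015SequentialIM}, and porting it to our conditional-entropy-over-$D_{val}$ objective (rather than a single query) will require checking that the per-example decomposition in Equation~(\ref{eqn:conditional_entropy}) plays well with the greedy selection, which I expect to follow by linearity of expectation applied inside the conditional mutual information.
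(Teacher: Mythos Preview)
Your high-level skeleton (greedy recursion, geometric unrolling) matches the paper, but the two key ingredients are mishandled and the proof would not go through as written. First, $\theta^{-1} = \max_v I(\mathcal{A}_{\mathbf{D}}(D_{val}); v)$ is not an \emph{upper} bound on marginal gains; it is used as a \emph{lower} bound on the greedy step's gain. The paper establishes, via the mutual independence of the candidate-set variables $\mathbf{c}_i$, that $\max_v I(\cdot; v \mid D_\pi) \geq I(\cdot; v^*) = \theta^{-1}$ for every $D_\pi$ (independence forces $I(D_\pi; v^*) = 0$ in the chain-rule expansion). This independence step is absent from your proposal, yet it is precisely the device that replaces submodularity here; without it you cannot compare a conditional gain at step $T$ to the unconditional quantity $\theta^{-1}$.

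Second, your derivation of $t'$ bounds the wrong object. Bounding the total $f(D_{\text{Opt}}) \leq \min\{\log|\mathcal{Y}|, t\log M\}$ does not give $t\min\{\log|\mathcal{Y}|,\log M\}$, and the ``multiply by $t$ to match the per-validation-example averaging'' step has no justification (the averaging over $D_{val}$ plays no role in the factor $t$). The paper instead telescopes $I(\cdot; D_{\text{Opt}} \mid D_\pi) = \sum_{j=1}^{t} I(\cdot; \mathbf{c}_{\text{Opt}_j} \mid \mathbf{c}_{\text{Opt}_{<j}}, D_\pi)$ and bounds \emph{each summand} by $\min\{H(\mathcal{A}_{\mathbf{D}}(D_{val})), H(\mathbf{c}_{\text{Opt}_j})\} \leq \min\{\log|\mathcal{Y}|,\log M\}$. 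Dividing this per-term upper bound by the greedy lower bound $\theta^{-1}$ gives each summand $\leq \theta\min\{\log|\mathcal{Y}|,\log M\}\cdot(\text{greedy gain})$; summing over $j=1,\ldots,t$ yields the recursion $\Delta_T \leq \theta t'(\Delta_T - \Delta_{T+1})$ with $t' = t\min\{\log|\mathcal{Y}|,\log M\}$, which then unrolls as you describe.
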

The above result, similarly as in~\cite{Chen2015SequentialIM}, suggests that data cleaning is guaranteed to achieve near-optimal information gathering up to a logarithmic factor $\min(\log |\mathcal{Y}|, \log M)$ when leveraging the sequential information strategy.

\section{Experiments}\label{sec:evaluation}

We now conduct an extensive set of experiments to compare CPClean with other data cleaning approaches in the context of K-nearest neighbor classifiers.


\begin{table}[t!]
\centering
\scalebox{0.8}{
\begin{tabular}{l|cccc}
\hline
\textbf{Dataset} & \textbf{Error Type} & \textbf{\#Examples} & \textbf{\#Features}&\textbf{Missing rate} \\ \hline
\at{BabyProduct}~\cite{babyproduct} & real &3042  &7 &11.8\% \\ 
\at{Supreme}~\cite{supreme} & synthetic &3052  &7&20\%  \\ 
\at{Bank}~\cite{DELVE} & synthetic & 3192 & 8&20\% \\ 
\at{Puma}~\cite{DELVE} & synthetic & 8192 & 8&20\% \\ \hline
\end{tabular}
}
\caption{Datasets characteristics}
\label{tbl:datasets}
\vspace{-5mm}
\end{table}

\subsection{Experimental Setup}

\stitle{Hardware and Platform.} All our experiments were performed on a machine with a 2.20GHz Intel Xeon(R) Gold 5120 CPU.

\stitle{Datasets.} One main challenge of evaluating data cleaning solutions is the lack of datasets with ground truth, and hence most data cleaning work resort to synthetic error injection. This is especially true in the context of incomplete information: a dataset with missing values is not likely to come with ground truth. In this work, besides three datasets with synthetic errors, we manage to find one dataset with \textit{real} missing values, where we are able to obtain the \textit{ground truth} via manual Googling. We
summarize all datasets in Table~\ref{tbl:datasets}. 

The \at{BabyProduct} dataset  contains various baby products of different categories (e.g., bedding, strollers).  Since the dataset was scraped from websites using Python scripts~\cite{babyproduct}, many records have missing values, presumably due to extractor errors.
We designed a classification task to predict whether a given baby product has a high price or low price based on other attributes (e.g. weight, brand, dimension, etc), and we selected a subset of product categories whose price difference is not so high so as to make the classification task more difficult.
For records with missing brand attribute, we then perform a Google search using the product title to obtain the product brand. For example, one record titled ``{\em Just Born Safe Sleep Collection Crib Bedding in Grey}'' is missing the product brand, and a search reveals that the brand is ``Just Born.''




We also use three datasets (\at{Supreme}, \at{Bank}, \at{Puma}), originally with no missing values, to inject synthetic missing values. Our goal is to inject missing values in the most realistic way possible and also to ensure that the missing values can have a large impact on classification accuracy. We follow the popular ``Missing Not At Random'' assumption~\cite{rubin1976inference}, where the probability of missing may be higher for more sensitive/important attributes. For example, high income people are more likely to not report their income in a survey. 
We first assess the relative importance of each feature in a classification task (by measuring the accuracy loss after removing a feature), and use the relative feature importance as the relative probability of a feature missing. We can then inject missing values into a dataset for any given missing rate (we use 20\% in our experiment).



\stitle{Model.} We use a KNN classifier with K=3
and use Euclidean distance 
as the similarity function.
%
For each dataset, we randomly select 1,000 examples as the validation set and 1,000 examples as the test set. The remaining examples are used as the training set.

\stitle{Cleaning Algorithms Compared.} We compare the following approaches for handling missing values in the training data.
\begin{itemize}[leftmargin=*]
   \item \underline{\emph{Ground Truth}}: This method uses the ground-truth version of the dirty data, and shows the performance upper-bound. 
    \item \underline{\emph{Default Cleaning}}: This is the default and most commonly used way for cleaning missing values in practice, namely, missing cells in a numerical column are filled in using the mean value of the column, and those in a categorical column are filled using the most frequent value of that column.
     \item \underline{\emph{CPClean}}: This is our proposal, which needs a candidate repair set $\mathcal{C}_i$ for each example with missing values. For missing cells in numerical columns, we consider five candidate repairs: the minimum value, the 25-th percentile, the mean value, the 75-th percentile and the maximum value of the column. For missing cells in categorical columns, we also consider five candidate repairs: the top 4 most frequent categories and a dummy category named ``other category". If a record $i$ has multiple missing values, then the Cartesian product of all candidate repairs for all missing cells forms $\mathcal{C}_i$. 
     We simulate human cleaning by picking the candidate repair that is closest to the ground truth.
    \item \underline{\emph{HoloClean}}: This is the state-of-the-art probabilistic data cleaning method~\cite{rekatsinas2017holoclean}. 
    As a weakly supervised machine learning system, it leverages multiple signals (e.g. quality rules, value correlations, reference data) to build a probabilistic model for imputing and cleaning data. Note that the focus of HoloClean is to find the most likely fix for a missing cell in a dataset without considering how the dataset is used by downstream classification tasks.  
    \item \underline{\emph{BoostClean}}: This is the state-of-the-art automatic data cleaning method for ML~\cite{krishnan2017boostclean}. At a high level, it selects, from a predefined set of cleaning methods, the one that has the maximum validation accuracy on the validation set. To ensure fair comparison, we use the same cleaning method as in CPClean, i.e., the predefined cleaning methods include cleaning a numerical column with missing values using 25-th percentile, the mean value, etc. We also use the same validation set as in CPClean.
    \item \underline{\emph{RandomClean}}: While CPClean uses the idea of sequential information maximization to select which examples to clean, \emph{RandomClean} simply selects an example randomly  to clean.  
    
    


\end{itemize}

\stitle{Performance Measures.} Besides the cleaning effort spent, we are mainly concerned with the test accuracy of models trained on datasets cleaned by different cleaning methods. Instead of reporting exact test accuracies for all methods, we only report them for \emph{Ground Truth} and \emph{Default Cleaning}, which represents the upper bound and the lower bound, respectively. For other methods, we report the percentage of closed gap defined as:
\begin{small}
    $$ \textit{gap closed by X} = \frac{\text{ accuracy(X) -  accuracy(\emph{Default Cleaning})}}{\text{ accuracy(\emph{Ground Truth}) -  accuracy(\emph{Default Cleaning})}}.$$
\end{small}{}




\subsection{Experimental Results}
\label{subsec:end-to-end}

\begin{table*}[!ht]
\centering
\scalebox{0.65}{
\begin{tabular}{c|cc|c|c|cc|cc}
\hline
\multirow{2}{*}{\textbf{Dataset}} & \emph{Ground Truth} & \emph{Default Cleaning} & \emph{BoostClean} & \emph{HoloClean} & \multicolumn{4}{c}{\emph{CPClean}}
\\ \cline{2-9} 
 & \textbf{Test Accuracy} & \textbf{Test Accuracy} & \textbf{Gap Closed} & \textbf{Gap Closed} & \textbf{Gap Closed } & \textbf{Examples Cleaned } 
 &
 \textbf{Gap Closed } &
 \textbf{Examples Cleaned }
 \\ \hline
BabyProduct & 0.668 & 0.589 & 1\% & 1\% & 99\% & 64\% & 72\% & 20\% \\ 
Supreme & 0.968 & 0.877 & 12\% & -4\% & 100\% & 15\% & 100\% & 20\% \\ 
Bank & 0.643 & 0.558 & 20\% & 11\% & 102\% & 93\% & 52\% & 20\% \\ 
Puma & 0.794 & 0.747 & 28\% & -64\% & 102\% & 63\% & 40\% & 20\%\\ \hline
\end{tabular}
}
\caption{End-to-End Performance Comparison}
\label{table:end_to_end_performance}
\end{table*}

\stitle{Model Accuracy Comparison.} Table \ref{table:end_to_end_performance} shows the end-to-end performance of our method and other automatic cleaning methods. We can see that the missing values exhibit different degress of impact on these datasets (the gap between \emph{Ground Truth} and \emph{Default Cleaning}). We can also observe that \emph{HoloClean}, the state-of-the-art standalone data cleaning approach performs poorly --- the gap closed can even be negative. This suggests that performing data cleaning on a data without considering how it is used later may not necessarily improve downstream model performance. On the other hand, we observe that \emph{BoostClean} shows a consistently positive impact on model performance by using the validation set to pick the most useful cleaning method. In all cases, \emph{CPClean} is able to close ~100\% of gap without manual cleaning of all dirty data. In fact, on \at{Supreme}, \emph{CPClean} only requires the manual cleaning of 15\% of missing records to close 100\% gap. We can also see from Table \ref{table:end_to_end_performance} that, by cleaning only 20\% of all dirty data, i.e., terminating the cleaning process at 20\% mark even if not all validation examples are CP'ed, \emph{CPClean} is able to close 66\% gap on average.

\begin{figure}[t!]
    \label{fig:vary_val}
    \centering
    \includegraphics[width=0.6\textwidth]{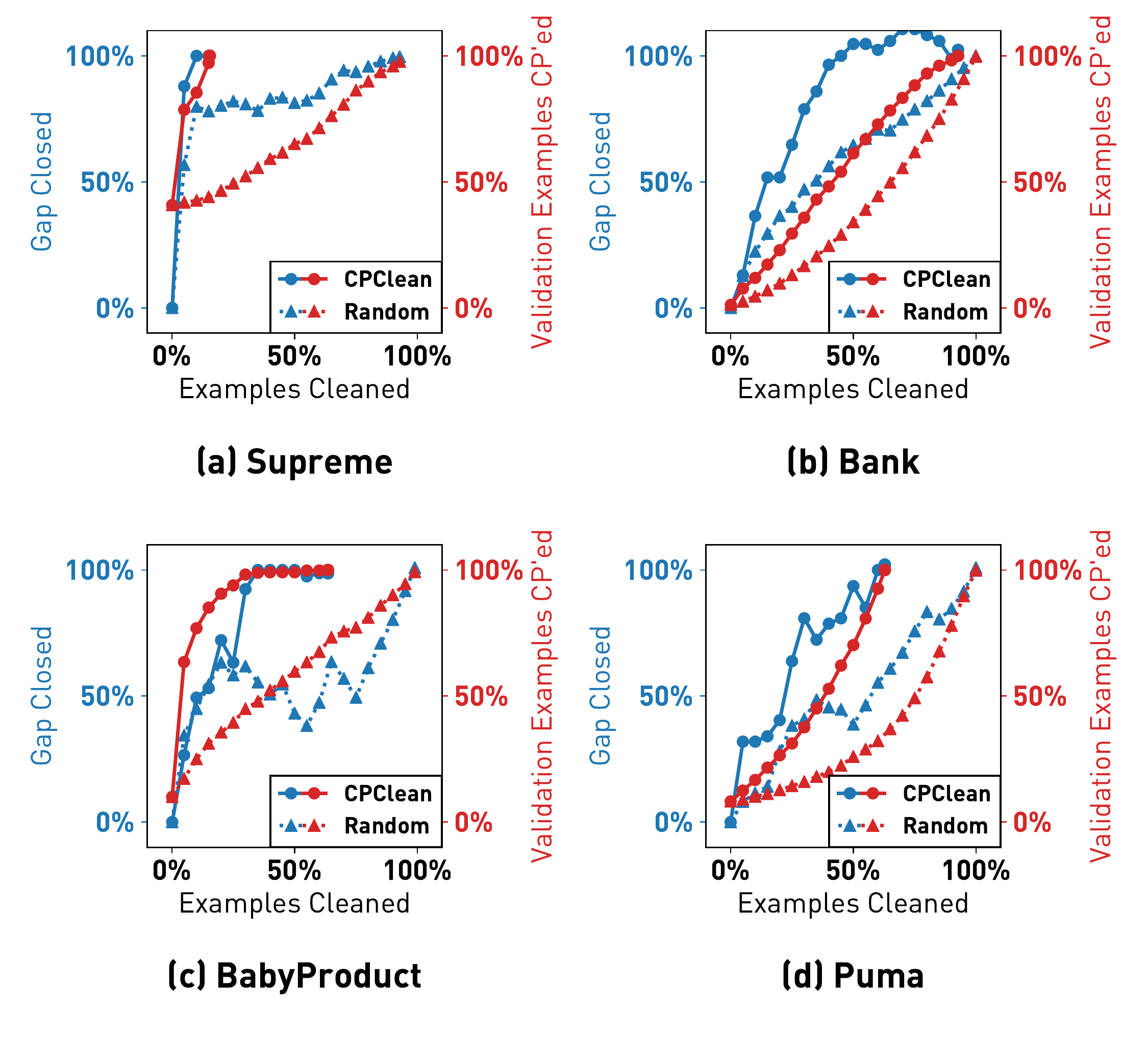}
    \caption{Comparison with Random Cleaning}
    \label{fig:cc_vs_random_cc_acc}
\end{figure}{}

\stitle{Early Termination.} If users have a limited cleaning budget, they may choose to terminate \emph{CPClean} early. To study the effectiveness of \emph{CPClean} in prioritizing cleaning effort, we compare it with \emph{RandomClean} that randomly picks an example to clean at each iteration. The results for \emph{RandomClean} are the average of 20 runs.

The red lines in Figure \ref{fig:cc_vs_random_cc_acc} show the percentage of CP'ed examples in the validation set as more and more examples are cleaned. As we can see, \emph{CPClean} (solid red line) dramatically outperforms the \emph{RandomClean} (dashed red line) both in terms of  the number of training examples cleaned so that all validation examples are CP'ed and in terms of the rate of convergence. For example, for \at{Supreme}, \emph{CPClean} requires the cleaning of 15\% examples while \emph{RandomClean} requires cleaning almost all training examples. 


The blue lines in Figure \ref{fig:cc_vs_random_cc_acc} show the percentage of gap closed for the test set accuracy. Again, we can observe that \emph{CPClean} significantly outperforms \emph{RandomClean}. 
For example, with 50\% of data cleaned in 
\at{Bank}, \emph{RandomClean} only closes about 65\% of the gap, whereas \emph{CPClean} closes almost 100\% of the gap. 

\begin{figure}[t!]
    \label{fig:vary_val}
    \centering
    \includegraphics[width=0.6\textwidth]{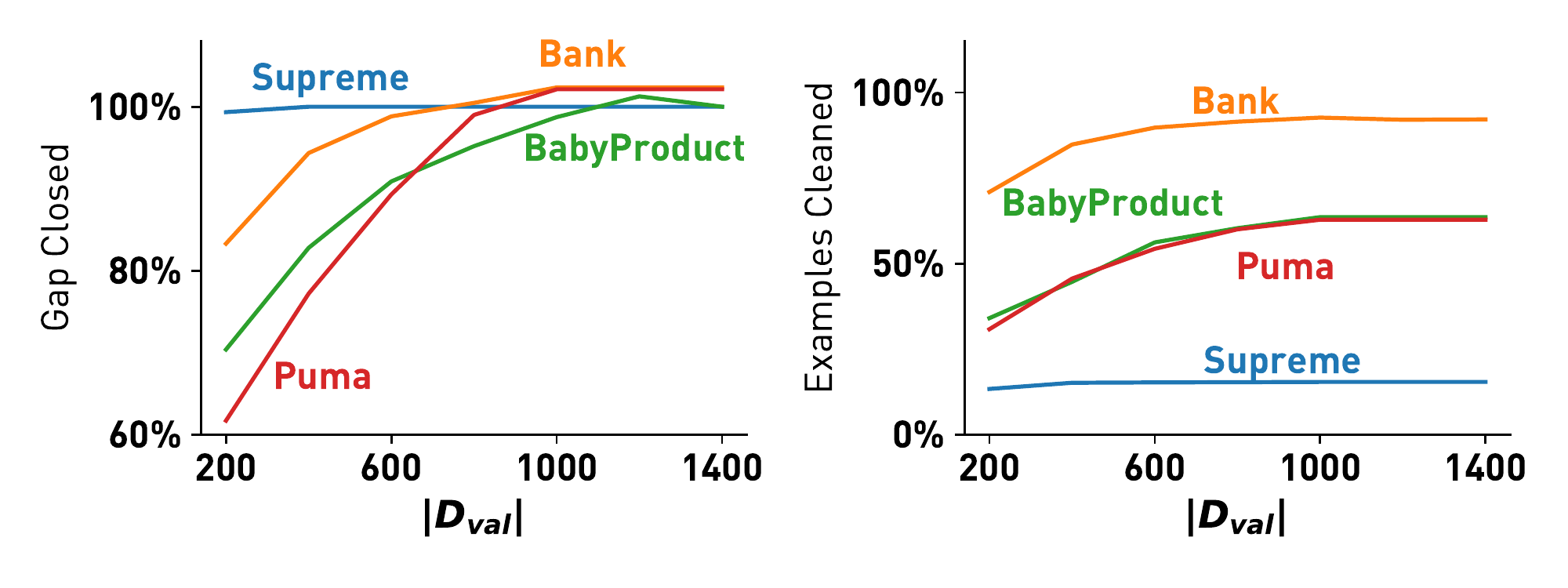}
    \caption{Varying size of $D_{val}$.}
    \label{fig:vary_val}
\end{figure}{}

\stitle{Size of the Validation Set $D_{val}$.}  We vary the size of $D_{val}$ to understand how it affects the result. As shown in Figure \ref{fig:vary_val}, as the size of validation set increases, both the test accuracy gap closed and the cleaning effort spent first increase and then become steady. This is because, when the validation set is small, it is easier to make all validation examples CP'ed (hence the smaller cleaning effort). However, a small validation set may not be representative of some unseen test set, and hence may not close the accuracy gap on test set. In all cases, we observe that 1K validation set is sufficiently large and further increasing it does not improve the performance.

\section{Related Work}\label{sec:related-work}

\stitle{Relational Query over Incomplete Information.} 
This work is heavily inspired by
the database literature of 
handling incomplete information~\cite{Alicebook},
consistent query answering~\cite{DBLP:conf/pods/ArenasBC99,DBLP:conf/icdt/LopatenkoB07,DBLP:series/synthesis/2011Bertossi}, and probabilistic databases~\cite{PDBBOOK}. 
While
these work targets SQL analytics, our proposed consistent prediction query targets ML analytics. 

\stitle{Learning over Incomplete Data.} The statistics and ML community have also studied the problem of learning over incomplete data. Many studies operate under certain missingness assumption (e.g., missing completeness at random) and reason about the performance of downstream classifiers in terms of asymptotic properties and in terms of
different imputation strategies~\cite{10.1007/s00521-009-0295-6}. 
In this work, we focus more on the
algorithmic aspect of this problem
and try to understand how to enable
more efficient manual cleaning
of the data.
Another flavor of work aims at developing ML models that are robust to certain types of noises, and multiple imputation~\cite{10.2307/2291635} is such a method that is most relevant to us. Our CP framework can be seen as an extreme case of multiple imputation (i.e, by trying all possible imputations) with efficient implementation (in KNN), which also enables novel manual cleaning for ML use cases. 

\update{Recently, Khosravi et al.
\cite{DBLP:journals/corr/abs-1903-01620} explored a similar semantics as ours, but for Logistic Regression
models. In this paper, we focus on efficient algorithms for nearest neighbor classifiers.}


\stitle{Data Cleaning and Analytics-Driven Cleaning.} The research on data cleaning (DL) has been thriving for many years. Many data cleaning works focus on performing standalone cleaning without considering how cleaned data is used by downstream analytics. We refer readers to a recent survey on this topic~\cite{DBLP:books/acm/IlyasC19}. 

As data cleaning itself is an expensive process that usually needs human involvement eventually (e.g., to confirm suggested repairs), the DB community is starting to work on analytics-driven cleaning methods. SampleClean~\cite{DBLP:conf/sigmod/WangKFGKM14} targets the problem of answering SQL aggregate queries when the input data is dirty by cleaning a sample of the dirty dataset, and at the same time, providing statistical guarantees on the query results.
ActiveClean~\cite{DBLP:journals/pvldb/KrishnanWWFG16} is an example of cleaning data intelligently for convex ML models that are trained using gradient descent methods.
As discussed before, while both ActiveClean and our proposal assume the use of a human cleaning oracle, they are incomparable as they are targeting different ML models. 
BoostClean~\cite{krishnan2017boostclean} automatically selects from a predefined space of cleaning algorithms, using a hold-out validation set via statistical boosting. We show that our proposal significantly outperforms BoostClean under the same space of candidate repairs.

 \section{Conclusion}\label{sec:conclusion}
 
In this work, we focused on the
problem of understanding the impact
of incomplete information on
training downstream ML models. We present a formal study of this impact by extending the notion of \emph{Certain Answers for Codd tables}, which has been explored by the database research community for decades, into the field of machine learning, by introducing the notion of \emph{Certain Predictions} (CP).
We developed efficient algorithms to analyze the impact via CP primitives, in the context of nearest neighor classifiers.
As an application, we further proposed a novel ``DC for ML'' framework built on top of CP primitives that often significantly outperforms existing techniques in accuracy, with mild manual cleaning effort.



\clearpage

\bibliographystyle{IEEEtran}

\bibliography{publication-abbreviations,ms}  

\clearpage

\appendix

\renewcommand\thefigure{\thesection.\arabic{figure}}
\renewcommand\thetheorem{\thesection.\arabic{theorem}}
\renewcommand\thelemma{\thesection.\arabic{lemma}}
\renewcommand\thealgorithm{\thesection.\arabic{algorithm}}
\renewcommand\theequation{\thesection.\arabic{equation}}
\renewcommand\theexample{\thesection.\arabic{example}}

\section{The SS Algorithm for Q2}

\setcounter{figure}{0}
\setcounter{theorem}{0}
\setcounter{lemma}{0}
\setcounter{algorithm}{0}
\setcounter{equation}{0}
\setcounter{example}{0}

\begin{figure}
    \centering
    \includegraphics[width=0.8\textwidth]{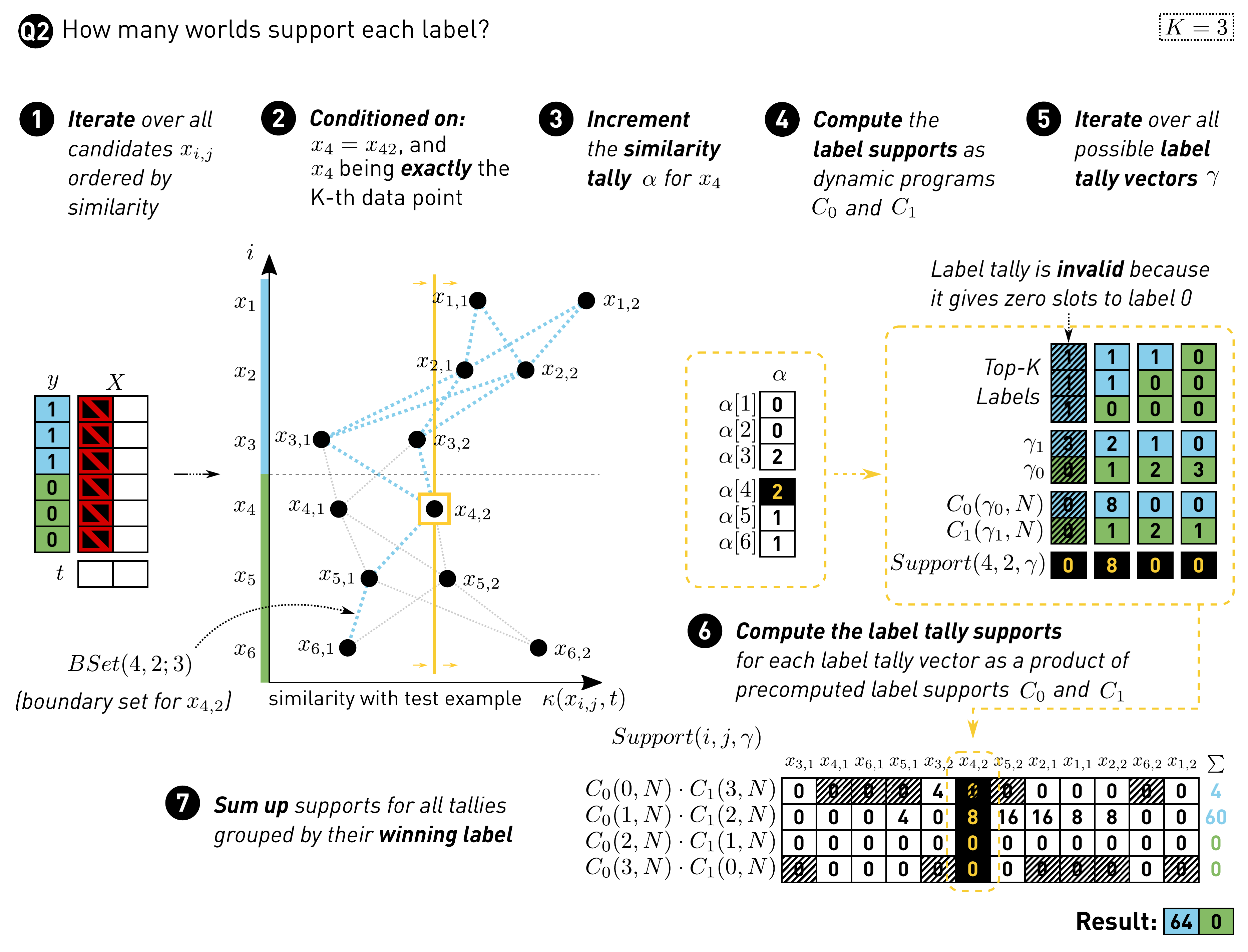}
    \caption{Illustration of SS when $K=3$ for $Q2$.}
    \label{fig:alg-sort-count-k3}
\end{figure}

\stitle{Algorithm Outline.} The SS algorithm answers the counting query according to the following expression:
\begin{equation} \label{eq:ss-outer-sums}
Q2(\mathcal{D}, t, l) =
   \sum_{i \in [N]} \sum_{j \in [M]}
   \sum_{\gamma \in \Gamma}
     \mathbb{I}[l = \argmax(\gamma)] 
     \cdot    
     Support (i, j, \gamma)
\end{equation}
Here, we iterate over all $i=1,...,N$ candidate sets $\mathcal{C}_i$, all their $j=1,...,M$ candidates $x_{i,j} \in \mathcal{C}_i$, and all possible tally vectors $\gamma \in \Gamma$. In each iteration we check if the winning label induced by tally vector $\gamma$ is the label $l$ which we are querying for. If yes, we include the label tally support into the count. An example of this iteration is depicted in the lower right table in Figure~\ref{fig:alg-min-max-k3}.

The \emph{label tally support} $Support(i, j, \gamma)$ is the number of possible worlds from the boundary set $BSet(i,j;K)$ where the tally of the labels in the top-$K$ is exactly $\gamma$. The boundary set $BSet(i,j;K)$ is the set of possible worlds where the value of example $x_i$ is taken to be $x_i=x_{i,j}$ \emph{and} $x_i$ is exactly the $K$-th most similar example to the test example $t$. Given a label tally $\gamma$, its label tally support is computed as:
\begin{equation} \label{eq:alg-ss-support:a}
    Support(i, j, \gamma) = \prod\nolimits_{l \in \mathcal{Y}}
C_{l}^{i, j}(\gamma_l, N)
\end{equation}
Here, $C_l^{i,j}(\gamma_l, N)$ is the \emph{label support} for label $l$, which is the number of possible worlds from the \emph{boundary set} $BSet(i,j;K)$ having exactly $\gamma_l$ examples in the top-$K$. Computing the label support is performed by using the following recursive structure:
\begin{equation*}
C_l^{i, j}(c,n) =
\begin{cases}
C^{i, j}_l(c, n-1), \quad \text{if} \ y_n \neq l, \\
C^{i, j}_l(c-1, n-1), \quad \text{if} \ x_n=x_i, \text{otherwise} \\
\alpha_{i, j}[n] \cdot C^{i, j}_l(c, n-1) + (M-\alpha_{i, j}[n]) \cdot C^{i, j}_l(c-1,n-1).
\end{cases}    
\end{equation*}
This recursion can be computed as a dynamic program, defined over $c\in\{0...K\}$ and $n\in\{1...N\}$. Its boundary conditions are $C_l^{i, j}(-1,n)=0$ and $C_l^{i, j}(c,0)=1$. To compute the support, it uses similarity tallies $\alpha$, defined as such:
\[
\alpha_{i, j}[n] = \sum\nolimits_{m=1}^M \mathbb{I}[\kappa(x_{n, m},t) \leq \kappa(x_{i, j}, t)].
\] 

\subsection{Proof of Correctness}

\begin{theorem}\label{theorem:alg-ss-correct}
The SS algorithm correctly answers $Q2(\mathcal{D}, t, l)$.
\end{theorem}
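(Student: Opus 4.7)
The plan is to show that Equation~(\ref{eq:ss-outer-sums}) partitions the set $\mathcal{I}_\mathcal{D}$ of possible worlds so that every world $D$ with prediction $\mathcal{A}_D(t) = l$ is counted exactly once. I would proceed in four steps.

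First, I would argue that the boundary sets $\{BSet(i,j;K) : i \in [N], j \in [M]\}$ form a partition of $\mathcal{I}_\mathcal{D}$. Because ties among similarity scores are broken by a fixed rule, every possible world $D$ has a uniquely defined $K$-th nearest neighbor to $t$; let $i$ be its index in $\mathcal{D}$ and let $x_{i, j_{i,D}} = x_{i,j}$ be its candidate value. By definition $D \in BSet(i,j;K)$, and no other pair $(i',j')$ can satisfy this, so the boundary sets are pairwise disjoint and cover $\mathcal{I}_\mathcal{D}$. Second, within each $BSet(i,j;K)$, the map $D \mapsto \gamma^D$ induces a further partition indexed by $\gamma \in \Gamma$. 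Hence $Support(i,j,\gamma)$ enumerates a well-defined subset, and these subsets disjointly cover $BSet(i,j;K)$. Third, since $\mathcal{A}_D(t) = \argmax \gamma^D$ by definition of the KNN classifier, the world $D$ contributes to $Q2(\mathcal{D},t,l)$ iff $\argmax(\gamma^D) = l$, matching the indicator in Equation~(\ref{eq:ss-outer-sums}). Combining these three observations reduces the theorem to the correctness of $Support(i,j,\gamma)$.

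Fourth, and this is the technical heart of the proof, I would verify the formula
\[
Support(i,j,\gamma) = \prod_{l \in \mathcal{Y}} C_l^{i,j}(\gamma_l, N).
\]
The key observation is that on $BSet(i,j;K)$ the choices of candidate values across distinct candidate sets $\mathcal{C}_n$ are independent: (i) $x_i$ is fixed to $x_{i,j}$, (ii) for each $n \in Top(K-1, D, t) \setminus \{i\}$ any of the $M - \alpha_{i,j}[n]$ candidates strictly more similar than $x_{i,j}$ can be chosen, and (iii) for each remaining $n$ any of the $\alpha_{i,j}[n]$ strictly less similar candidates can be chosen. Because the label $y_n$ of each candidate set is fixed, the number of worlds in $BSet(i,j;K)$ whose top-$K$ label tally equals $\gamma$ factors over labels, and within each label class $l$ it equals the number of ways to pick $\gamma_l$ candidate sets of that label to be placed in the top-$K$ and weight each choice by the appropriate $\alpha_{i,j}[n]$ or $M-\alpha_{i,j}[n]$ factor.

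Finally, I would verify by induction on $n$ that $C_l^{i,j}(c,n)$ computes exactly this quantity restricted to the first $n$ candidate sets. The three cases of the recurrence correspond respectively to: skipping a candidate set whose label is not $l$ (no contribution, no slot used), handling the pinned example at $n=i$ (one slot used for free because $y_i$ is fixed and $x_{i,j}$ is already in the top-$K$), and otherwise branching on whether $\mathcal{C}_n$ is placed in the top-$K$ with multiplicity $M-\alpha_{i,j}[n]$ or outside with multiplicity $\alpha_{i,j}[n]$. The boundary conditions $C_l^{i,j}(-1,n) = 0$ and $C_l^{i,j}(c,0) = 1$ encode the empty product and the impossibility of allocating a negative number of slots. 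The main obstacle is the bookkeeping around the pinned row $n = i$: I need to be careful to decrement $c$ exactly once (and only when $y_i = l$) so that the $\gamma_{y_i} \geq 1$ requirement implicit in $BSet(i,j;K)$ is enforced, while ensuring no double counting for labels $l \neq y_i$. Once this is checked, the theorem follows by combining the partition argument with the factorization and the inductive correctness of $C_l^{i,j}$.
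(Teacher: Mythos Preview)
Your proposal is correct and follows essentially the same approach as the paper: partition $\mathcal{I}_\mathcal{D}$ into boundary sets, further partition by label tally, identify the KNN prediction with $\argmax\gamma$, and reduce to the correctness of $Support(i,j,\gamma)$. The paper's proof stops at this reduction and simply defers the correctness of $Support$ to the earlier building-blocks discussion, whereas you go further and outline the factorization over labels and the inductive verification of the recurrence $C_l^{i,j}(c,n)$, including the delicate handling of the pinned row $n=i$; this makes your argument more self-contained but is not a different strategy.
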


\begin{proof}

The SS algorithm aims to solve a counting problem by using a technique of partitioning a set and then computing the sizes of the relevant partitions. To prove its correctness we need to: (1) argue that the partitioning procedure is valid and produces disjoint subsets of the original set; and (2) argue that the size of the subset is computed correctly.

To prove the validity of the partitioning method, we start off by reviewing how a brute-force approach would answer the same query:
\begin{equation*}
Q2(\mathcal{D}, t, l) = \sum_{D \in \mathcal{D}} \mathbb{I}[ \mathcal{A}_D(t)=l ]
\end{equation*}

When we partition the sum over all possible worlds into boundary sets $BSet(i,j;K)$ for each $i\in{1...N}$ and $j\in{1...M}$, we obtain the following expression:

\begin{equation*}
Q2(\mathcal{D}, t, l) =
    \sum_{i \in [N]} \sum_{j \in [M]}
    \sum_{D \in BSet(i,j;K)}
        \mathbb{I}[ \mathcal{A}_D(t)=l ]
\end{equation*}

As we mentioned, a boundary set is the set of the possible worlds where $x_i=x_{i,j}$ and $x_i$ is the $K$-th most similar data example to $t$. Since every possible world selects just one candidate per candidate set, for every $i\in\{1...N\}$, the possible world where $x_i=x_{i,j}$ is always different from the possible world where $x_i=x_{i,j'}$, for every $j,j' \in \{1...M\}$ such that $j \neq j'$. Furthermore, every possible world induces a fixed ordering of data examples based on their similarity to $t$. Therefore, any possible worlds where $x_i$ occupies the $K$-th position in that ordering is different from the possible world where it occupies any other position. Thus, we can conclude that all boundary sets $BSet(i,j,K)$ are distinct for all distinct $i$ and $j$.

Given that we are dealing with the $K$-NN algorithm, since each possible world $D$ induces a fixed set of top-$K$ examples, consequently it induces a fixed top-$K$ label tally $\gamma^D$. Since only one label tally of all the possible ones will be correct one, we can rewrite the inner sum as:

\begin{equation*}
Q2(\mathcal{D}, t, l) =
    \sum_{i \in [N]} \sum_{j \in [M]}
    \sum_{D \in BSet(i,j;K)}
    \sum_{\gamma \in \Gamma}
        \mathbb{I}[ 
            \gamma^D = \gamma
        ]
        \mathbb{I}[
            l = \arg\max \gamma
        ]
\end{equation*}

Since in the above expression, the $\gamma$ is independent from $D$, we can reorganize the sums as such:

\begin{equation*}
Q2(\mathcal{D}, t, l) =
    \sum_{i \in [N]} \sum_{j \in [M]}
    \sum_{\gamma \in \Gamma}
        \mathbb{I}[
            l = \arg\max \gamma
        ]
    \sum_{D \in BSet(i,j;K)}
        \mathbb{I}[ 
            \gamma^D = \gamma
        ]
\end{equation*}

We can notice that the innermost sum is equivalent to the definition of a label tally support, which means we can replace it as such:

\begin{equation*}
Q2(\mathcal{D}, t, l) =
   \sum_{i \in [N]} \sum_{j \in [M]}
   \sum_{\gamma \in \Gamma}
     \mathbb{I}[l = \argmax(\gamma)] 
     \cdot    
     Support (i, j, \gamma)
\end{equation*}

Assuming that the label tally support $Support(i,j, \gamma)$ is computed correctly, as shown in Section~\ref{sec:alg:ss:two-building-blocks}, we can conclude that both the partitioning and the partition size computation problems are solved correctly, hence proving our original claim.

\end{proof}

\subsection{Optimization Using Divide and Conquer}

\stitle{Algorithm Outline.} This version of the algorithm is almost identical to the original SS algorithm described previously, except for the way it computes the label support. Namely, in the original algorithm we were using the dynamic program $C_l^{i,j}(c,n)$ to return the number of possible worlds in the boundary set $BSet(i,j;K)$ that support having exactly $c$ examples in the top-$K$. Here, the parameter $n\in\{1...N\}$ denoted that we were only considering the subset of candidate sets $\mathcal{C}_i$ where $i \in \{1...n\}$.

If we observe Algorithm~\ref{alg:sort-count:dp:binary}, we can see that the dynamic program $C_l^{i,j}(c,n)$ is re-computed in every iteration of the outer loop. However, at the same time we can see that the similarity tally $\alpha$, which is used to compute the dynamic program, gets only one of its elements updated. To take advantage of that, we apply a divide-and-conquer strategy and redefine the recurrence relation as a tree structure:

\begin{equation}\label{eq:recurrence-dc}
\begin{split}
    &T_l^{i,j}(c, a, b) = \sum_{k=0}^c T_l^{i,j}(k, a, m) \cdot T_l^{i,j}(c-k, a+1, b), \\
&\text{where }m = \lfloor\frac{a + b}{2} \rfloor.
\end{split}
\end{equation}

To efficiently maintain the dynamic program $T_l^{i,j}$ across iterations over $(i,j)$, we organize it in a binary tree structure. Each node, denoted as $n_{a,b}$, contains a list of values of $T_l^{i,j}(c,a,b)$ for all $c \in \{0...K\}$. Its two children are $n_{a,m}$ and $n_{m+1,b}$ where $m = \lfloor\frac{a + b}{2} \rfloor$. The leaves are nodes $n_{a,a}$ with both indices equal, which get evaluated according to the following base conditions:

\begin{enumerate}
    \item $T_l^{i,j}(c, a, a)=1$, if $y_a \neq l$; \\
    \emph{\textbf{Rationale:}} Skip examples with label different from $l$.
    
    \item $T_l^{i,j}(0, i, i)=0$ and $T_l^{i,j}(1, i, i)=1$; \\
    \emph{\textbf{Rationale:}} The $i$-th example must be in the top-$K$, unless it got skipped.
    \item $T_l^{i,j}(0, a, a)=\alpha[a]$; \\
    \emph{\textbf{Rationale:}} If the $a$-th example is in the top-$K$, there are $\alpha[a]$ candidates to choose from.
    \item $T_l^{i,j}(1, a, a)=M-\alpha[a]$; \\
    \emph{\textbf{Rationale:}} If the $a$-th example is not in the top-$K$, there are $M-\alpha[a]$ candidates to choose from.
    \item $T_l^{i,j}(c, a, a)=0$, if $c \notin \{0,1\}$ \\
    \emph{\textbf{Rationale:}} Invalid case because an example can either be ($c=1$) or not be ($c=0$) in the top-$K$.
\end{enumerate}

The leaf nodes $n_{a,a}$ of this tree correspond to label support coming from individual data examples. The internal nodes $n_{a,b}$ correspond to the label support computed over all leaves in their respective sub-trees. This corresponds to data examples with index $i \in \{a...b\}$. The root node $n_{1,N}$ contains the label support computed over all data examples.

Since between any two consecutive iterations of $(i,j)$ in Algorithm~\ref{alg:sort-count:dp:binary} we only update the $i$-th element of the similarity tally $\alpha$, we can notice that out of all leaves in our binary tree, only $n_{i,i}$ gets updated. This impacts only $\mathcal{O}(N)$ internal nodes which are direct ancestors to that leaf. If we update only those nodes, we can avoid recomputing the entire dynamic program. The full algorithm is listed in Algorithm~\ref{alg:sort-count:dpdc:binary}.

\begin{algorithm}
\caption{Algorithm \textbf{SS-DC} for Q2 with $K$-NN.} \label{alg:sort-count:dpdc:binary}
\begin{algorithmic}[1]
\REQUIRE {$\mathcal{D}$, incomplete dataset; $t$, target data point.}
\ENSURE {$r$, integer vector, s.t. $r[y]=Q2(\mathcal{D}, t, y), \forall y \in \mathcal{Y}$.}

\STATE $s \gets \mathtt{kernel} (\mathcal{D}, t)$;
\STATE $\alpha \gets \emph{zeros}(|\mathcal{D}|)$;
\STATE $r \gets \emph{zeros}(|\mathcal{Y}|)$;
\FORALL{$l \in \mathcal{Y}$}
        \FORALL{$k \in [K]$}
        \STATE Initialize the tree $T_l(k,1,N)$.
        \ENDFOR
    \ENDFOR
\FORALL{$(i,j) \in \mathtt{argsort}(s)$}
    \STATE $\alpha[i] \gets \alpha[i] + 1$;
    \STATE Update the leaf node $T_{y_i}(c, i, i)$ and its ancestors for all $c\in \{1...K\}$.
    \FORALL{possible valid tally vectors $\gamma \in \Gamma$}
        \STATE $y_p \gets \mathtt{argmax} (\gamma)$;
        \STATE Compute 
        $Support(i, j, \gamma) = \mathbb{I}[\gamma_{y_i} \geq 1] \cdot \prod_{l \in \mathcal{Y}} T_l(\gamma_l, 1, N)$;
        \STATE $r[y_p] \gets r[y_p] + Support(i, j, \gamma)$;
    \ENDFOR

\ENDFOR

\RETURN $r$;
\end{algorithmic}
\end{algorithm}

\stitle{Complexity.} We analyze the complexity of Algorithm~\ref{alg:sort-count:dpdc:binary}:
\begin{itemize}
    \item The sorting procedure requires $\mathcal{O} (N \cdot M \log N \cdot M)$ steps as it sorts all elements of $S$.
    \item The tree initialization can be performed eagerly in $\mathcal{O}(KN)$ time, or lazily in constant amortized time.
    \item The outer loop iterates over $\mathcal{O} (N \cdot M)$ elements.
    \item In each inner iteration, we update $\mathcal{O}(\log N)$ nodes. Each node maintains support values for all $c \in \{1...K\}$ and each one takes $\mathcal{O}(K)$ to recompute. Therefore, the tree update can be performed in $\mathcal{O}(K^2 \log N)$ time.
    \item Furthermore, in each iteration, we iterate over all possible label assignments, which requires $\mathcal{O}\left({|\mathcal{Y}|+K-1 \choose K}\right)$ operations. 
    \item For each label assignment, we need $\mathcal{O}(|\mathcal{Y}|)$ multiplications.
\end{itemize}
This renders the final complexity to be the sum of $\mathcal{O} (N \cdot M \cdot (\log(N\cdot M) +  K^2 \cdot \log N ))$ and $\mathcal{O} (N \cdot M \cdot (|\mathcal{Y}| + {|\mathcal{Y}|+K-1 \choose K}))$. When $|\mathcal{Y}|$ and $K$ are relatively small constants, this reduces to $\mathcal{O} (N \cdot M \cdot \log(N\cdot M))$.

\subsection{Polynomial Time Solution for $|\mathcal{Y}| \geq 1$}

We have seen that the previously described version of the SS algorithm gives an efficient polynomial solution for $Q2$, but only for a relatively small number of classes $|\mathcal{Y}|$. When $|\mathcal{Y}| \gg 1$, the $\mathcal{O} ({|\mathcal{Y}|+K-1 \choose K})$ factor of the complexity starts to dominate. For a very large number of classes (which is the case for example in the popular ImageNet dataset), running this algorithm becomes practically infeasible. In this section we present a solution for $Q2$ which is polynomial in $|\mathcal{Y}|$.

The main source of computational complexity in Algorithm~\ref{alg:sort-count:dpdc:binary} is in the for-loop starting at line 10. Here we iterate over all possible tally vectors $\gamma$, and for each one we compute the label tally support $Support(i,j,\gamma)$ (line 12) and add it to the resulting sum (line 13) which is selected according to the winning label with the largest tally in $\gamma$ (line 11).

The key observation is that, for $l$ to be the winning label, one only needs to ensure that no other label has a larger label tally. In other words, label $l$ will be predicted whenever $\gamma_l > \gamma_{l'}$ for all $l \neq l'$, regardless of the actual tallies of all $l'$. Therefore, we found that we can group all the label tally vectors according to this predicate. To achieve this, we define the following recurrence:

\begin{equation}
    D_{Y,c} (j, k) = \sum_{n=0}^{\min\{c,k\}} T_{Y_j}(n, 1, N) \cdot D_{Y,c} (j+1, k-l).
\end{equation}

Here $Y$ is the list of all labels in $\mathcal{Y} \setminus \{l\}$ and $T_{Y_j}^{i,j}(n, 1, N)$ is the label support for label $Y_j$, as described in the previous section. The semantics of $D_{Y,c} (j,k)$ is the number of possible worlds where the top-$K$ contains at most $k$ examples with labels $l' \in Y_{0...j}$ and no label has tally above $c$. We can see that $D_{Y,c} (j, k)$ can also be computed as a dynamic program with base conditions $D_{Y,c} (|Y|, 0)=1$ and $D_{Y,c} (|Y|, k)=0$ for $k > 0$.

\begin{algorithm}
\caption{Algorithm \textbf{SS-DC-MC} for Q2 with $K$-NN.} \label{alg:sort-count:dpdc:multiclass}
\begin{algorithmic}[1]
\REQUIRE {$\mathcal{D}$, incomplete dataset; $t$, target data point.}
\ENSURE {$r$, integer vector, s.t. $r[y]=Q2(\mathcal{D}, t, y), \forall y \in \mathcal{Y}$.}

\STATE $s \gets \mathtt{kernel} (\mathcal{D}, t)$;
\STATE $\alpha \gets \emph{zeros}(|\mathcal{D}|)$;
\STATE $r \gets \emph{zeros}(|\mathcal{Y}|)$;
\FORALL{$l \in \mathcal{Y}$}
        \FORALL{$k \in [K]$}
        \STATE Initialize the tree $T_l(k,1,N)$.
        \ENDFOR
    \ENDFOR
\FORALL{$(i,j) \in \mathtt{argsort}(s)$}
    \STATE $\alpha[i] \gets \alpha[i] + 1$;
    \STATE Update the leaf node $T_{y_i}(c, i, i)$ and its ancestors for all $c\in \{1...K\}$.
    \FORALL{$l \in \mathcal{Y}$}
        \STATE $Y \gets [\mathcal{Y} \setminus \{l\}]$;
        \FORALL{$c \in \{1...K\}$}
            \STATE Compute $D_{Y,c} (|\mathcal{Y}|-1, K-c-1)$ using dynamic programming;
            \STATE $r[y_p] \gets r[y_p] + T_l(c, 1, N) \cdot D_{Y,c} (|\mathcal{Y}|-1, K-c-1)$;
        \ENDFOR
    \ENDFOR

\ENDFOR

\RETURN $r$;
\end{algorithmic}
\end{algorithm}

In terms of performance, the complexity of Algorithm~\ref{alg:sort-count:dpdc:multiclass}, compared to Algorithm~\ref{alg:sort-count:dpdc:binary} has one more major source of time complexity, which is the computation of the dynamic program $D_{Y,c}$ which takes $\mathcal{O} ( |\mathcal{Y}| \cdot K^2)$ time. Since the for loops in lines $10$ and $12$ take $O(|\mathcal{Y}|)$ and $O(K)$ time respectively, the overall complexity of the algorithm becomes $O(M \cdot N \cdot (\log (M \cdot N) + K^2 \log N + |\mathcal{Y}|^2 \cdot K^3 ) $.

\section{The MM Algorithm for Q1}

\setcounter{figure}{0}
\setcounter{theorem}{0}
\setcounter{lemma}{0}
\setcounter{algorithm}{0}
\setcounter{equation}{0}
\setcounter{example}{0}

\stitle{Algorithm Outline.} We are given an incomplete dataset $\mathcal{D} = \{ (\mathcal{C}_i, y_i): i=1,...,N\}$, a test data point $t \in \mathcal{X}$ and a class label $l \in \mathcal{Y}$. The MM algorithm answers the checking query $Q1(\mathcal{D}, t, l)$ for $K$-NN with similarity kernel $\kappa$ by constructing the $l$-expreme possible world $E_{l, \mathcal{D}}$ defined as:

\begin{align} \label{eq:lextreme-def}
    \begin{split}
    E_{l, \mathcal{D}} &= \{ (M_i, y_i) : (\mathcal{C}_i, y_i) \in \mathcal{D} \} \\
    M_i &= \begin{cases}
        \underset{x_{i,j} \in \mathcal{C}_i}{\arg\max} \ \kappa(x_{i,j}, t), & \text{if} \ y_i=l, \\
     \underset{x_{i,j} \in \mathcal{C}_i}{\arg\min} \ \kappa(x_{i,j}, t), & \text{otherwise}
    \end{cases}
    \end{split}
\end{align}

The answer to $Q1(\mathcal{D}, t, l)$ is obtained by checking if: (1) $K$-NN trained over $E_{l,\mathcal{D}}$ predicts $l$, and (2) for all other labels $l' \in \mathcal{Y} \setminus \{l\}$, $K$-NN trained over $E_{l',\mathcal{D}}$ does not predict $l$. Figure~\ref{fig:alg-min-max-k3} depicts this algorithm for an example scenario.


\begin{figure}
    \centering
    \includegraphics[width=0.7\textwidth]{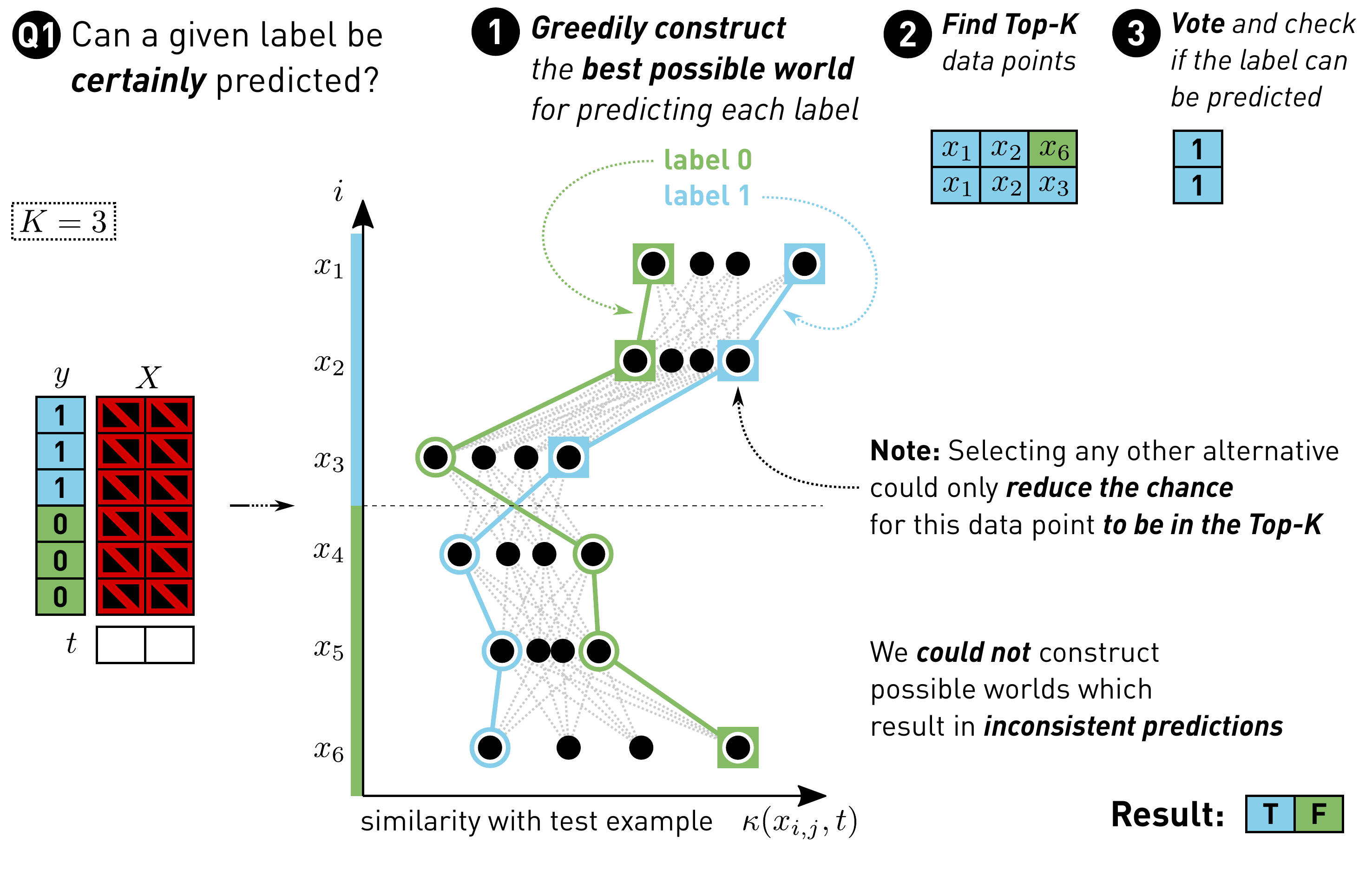}
    \caption{Illustration of MM when $K=3$ for $Q1$.}
    \label{fig:alg-min-max-k3}
\end{figure}

\begin{example}
In Figure~\ref{fig:alg-min-max-k3} we can see an example scenario illustrating the MM algorithm for $K=3$. On the left, we have an incomplete dataset with $N=6$ examples, each with $M=4$ candidates. We construct $l$-extreme worlds for both $l=0$ and $l=1$, by picking the candidate with maximal similarity when $y_i=l$ and the candidate with minimal similarity when $y_i \neq l$. We can see visually that any other choice of candidate could not reduce the odds of $l$ being predicted. In this scenario, we can see that both $l$-extreme worlds predict label $1$, which means that we can conclude that label $1$ can be \emph{certainly predicted}.
\end{example}

\subsection{Proof of Correctness}

\begin{lemma}\label{lemma:alg-mm-relation}

Let $D^{(1)}, D^{(2)} \in \mathcal{I}_\mathcal{D}$ be two possible worlds generated from an incomplete dataset $\mathcal{D}$. Given a test example $t \in \mathcal{X}$ and label $l\in \mathcal{Y}$ where $|\mathcal{Y}|=2$, let $R_{t,l}$ be a partial ordering relation defined as such:
\[
R_{t,l} \left( D^{(1)}, D^{(2)} \right) \ := \
    \bigwedge_{i=1}^N
        \left( y_i= l \ \land \ \kappa(x_i^{(1)},t) \leq \kappa(x_i^{(2)},t) \right) \lor
        \left( y_i \neq l \ \land \ \kappa(x_i^{(1)},t) \geq \kappa(x_i^{(2)},t) \right)
\]

Then, the following relationship holds:

\[
R_{t, l} \left( D^{(1)}, D^{(2)} \right) \ \implies \
\Big( \left( \mathcal{A}_{D^{(1)}}(t)=l \right) \implies \left( \mathcal{A}_{D^{(2)}}(t)=l \right) \Big)
\]

\end{lemma}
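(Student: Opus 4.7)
My approach is to show that moving from $D^{(1)}$ to $D^{(2)}$ can only \emph{weakly increase} the number of label-$l$ examples appearing in the top-$K$ neighbor set of $t$. Because $|\mathcal{Y}| = 2$, predicting $l$ is equivalent to having strictly more than $K/2$ label-$l$ items in the top-$K$, so monotonicity of this count immediately yields the implication.

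Concretely, let $a_d = |\{i \in Top(K, D^{(d)}, t) : y_i = l\}|$ for $d \in \{1,2\}$. Since $\mathcal{A}_{D^{(1)}}(t) = l$ in a binary problem, $a_1 > K/2$. The lemma therefore reduces to the \textbf{monotonicity claim} $a_2 \geq a_1$, because then $a_2 > K/2$ forces the majority vote in $D^{(2)}$ to be $l$ as well.

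I would prove the monotonicity claim by contradiction. Suppose $a_2 < a_1$. A direct counting argument on the fixed-size top-$K$ sets gives that
\[
S_L := \{i : y_i = l,\ i \in Top(K, D^{(1)}, t) \setminus Top(K, D^{(2)}, t)\}
\]
and
\[
S_N := \{j : y_j \neq l,\ j \in Top(K, D^{(2)}, t) \setminus Top(K, D^{(1)}, t)\}
\]
each have size at least $a_1 - a_2 \geq 1$. Pick any $i \in S_L$ and any $j \in S_N$. Using the top-$K$ definition together with the standing no-ties convention, membership in opposite top-$K$ sets gives strict inequalities in opposing directions:
\[
\kappa(x_i^{(1)}, t) > \kappa(x_j^{(1)}, t)
\quad\text{and}\quad
\kappa(x_i^{(2)}, t) < \kappa(x_j^{(2)}, t).
\]
But $R_{t,l}$ applied to $i$ (label $l$) and to $j$ (label $\neq l$) says $\kappa(x_i^{(2)}, t) \geq \kappa(x_i^{(1)}, t)$ and $\kappa(x_j^{(2)}, t) \leq \kappa(x_j^{(1)}, t)$. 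Chaining the four inequalities yields
\[
\kappa(x_i^{(2)}, t) \;\geq\; \kappa(x_i^{(1)}, t) \;>\; \kappa(x_j^{(1)}, t) \;\geq\; \kappa(x_j^{(2)}, t),
\]
contradicting $\kappa(x_i^{(2)}, t) < \kappa(x_j^{(2)}, t)$. Hence $a_2 \geq a_1$.

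The main subtlety is isolating the pair $(i,j)$ so that $R_{t,l}$ applies with the ``right signs'' to close the loop; everything else is bookkeeping. A minor technicality is ensuring all inequalities are strict in the correct direction, which is handled by the paper's no-ties convention from Section~\ref{sec:alg:ss:two-building-blocks}. I expect the binary-label assumption to be essential only in the very last step: with $|\mathcal{Y}| > 2$, even if the label-$l$ count is monotone, the count of some \emph{other} label could overtake it under the swap, which is exactly why the MM idea does not extend beyond binary classification.
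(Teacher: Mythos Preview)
Your proof is correct and follows the same contradiction strategy as the paper: assume $D^{(1)}$ predicts $l$ but $D^{(2)}$ does not, and show this is incompatible with $R_{t,l}$. The paper argues more informally by case-splitting on the label of an index where the inequality is strict and asserting that such a change ``could only increase the number of data points with label $l$ in the top-$K$''; your version sharpens this into an explicit swap-pair argument by exhibiting $i\in S_L$ and $j\in S_N$ and chaining the four inequalities, which makes the contradiction fully rigorous without appealing to a one-change-at-a-time intuition. Your closing remark about why $|\mathcal{Y}|>2$ breaks the argument matches the paper's Case~3.
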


\begin{proof}

We will prove this by contradiction. Consider the case when $\mathcal{A}_{D^{(1)}}(t)=l$ and $\mathcal{A}_{D^{(2)}}(t) \neq l$, that is, possible world $D^{(1)}$ predicts label $l$ but possible world $D^{(2)}$ predicts some other label $l' \neq l$. That means that in the top-$K$ induced by $D^{(2)}$ has to be at least one more data point with label $l'$ than in the top-$K$ induced by $D^{(1)}$. Is it possible for the premise to be true?

The similarities $\kappa(x_i^{(1)},t)$ and $\kappa(x_i^{(2)},t)$ cannot all be equal because that would represent equal possible worlds and that would trivially contradict with the premise since the labels predicted by equal possible worlds cannot differ. Therefore, at least one of the $i=1,...,N$ inequalities has to be strict. There, we distinguish three possible cases with respect to the class label $y_i$ of the $i$-th example:
\begin{enumerate}
    \item $y_i = l$: This means that the similarity of a data point coming from $D^{(2)}$ is higher than the one coming from $D^{(1)}$. However, this could only elevate that data point in the similarity-based ordering and could only increase the number of data points with label $l$ in the top-$K$. Since $\mathcal{A}_{D^{(1)}}(t)=l$, the prediction of $D^{(2)}$ could not be different, which is \textbf{contradicts} the premise.
    
    \item $y_i = l' \neq l$ and $|\mathcal{Y}|=2$: We have a data point with a label different from $l$ with lowered similarity, which means it can only drop in the ordering. This can not cause an increase of data points with label $l'$ in the top-$K$, which again \textbf{contradicts} the premise.
    
    \item $y_i = l' \neq l$ and $|\mathcal{Y}| \geq 2$: Here we again have a data point with label $l'$ with lowered similarity. However, if that data point were to drop out of the top-$K$, it would have been possible for a data point with a third label $l'' \notin \{l, l'\}$ to enter the top-$K$ and potentially tip the voting balance in factor of this third label $l''$ (assuming there are enough instances of that label in the top-$K$ already). This would \textbf{not contradict} the premise. However, since the lemma is defined for $|\mathcal{Y}|=2$, this third case can actually never occur.
\end{enumerate}

Finally, for $|\mathcal{Y}|=2$, we can conclude that our proof by contradiction is complete.

\end{proof}

\begin{lemma}\label{lemma:alg-mm-label-prediction}
Let $E_{l,\mathcal{D}}$ be the $l$-extreme world defined in Equation~\ref{eq:lextreme-def}. Then, the $K$-NN algorithm trained over $E_{l,\mathcal{D}}$ will predict label $l$ if and only if there exists a possible world $D \in \mathcal{I}_\mathcal{D}$ that will predict label $l$.
\end{lemma}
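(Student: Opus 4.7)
The plan is to prove Lemma 2.2 as a direct corollary of Lemma 2.1, by splitting it into the two implications.

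For the forward direction ($\Rightarrow$), the argument is trivial: by construction each $M_i$ in the definition of $E_{l,\mathcal{D}}$ is selected from the candidate set $\mathcal{C}_i$, so $E_{l,\mathcal{D}}$ is itself an element of $\mathcal{I}_\mathcal{D}$. Hence if $K$-NN trained on $E_{l,\mathcal{D}}$ predicts $l$, we can simply take $D = E_{l,\mathcal{D}}$ as the witnessing possible world.

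For the backward direction ($\Leftarrow$), suppose some $D \in \mathcal{I}_\mathcal{D}$ satisfies $\mathcal{A}_D(t) = l$. The plan is to verify that the relation $R_{t,l}(D, E_{l,\mathcal{D}})$ from Lemma 2.1 holds, and then appeal to that lemma to conclude $\mathcal{A}_{E_{l,\mathcal{D}}}(t) = l$. To check the relation, I would examine each index $i \in [N]$ by cases. If $y_i = l$, then $M_i = \arg\max_{x_{i,j} \in \mathcal{C}_i} \kappa(x_{i,j}, t)$, so in particular $\kappa(x_i^D, t) \leq \kappa(M_i, t)$, which is exactly the first disjunct of $R_{t,l}$ at position $i$. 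If $y_i \neq l$, then $M_i = \arg\min_{x_{i,j} \in \mathcal{C}_i} \kappa(x_{i,j}, t)$, so $\kappa(x_i^D, t) \geq \kappa(M_i, t)$, matching the second disjunct. Thus $R_{t,l}(D, E_{l,\mathcal{D}})$ holds on every coordinate, and Lemma 2.1 gives the desired implication.

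There is no serious obstacle here: the entire technical content has already been packaged into Lemma 2.1, and Lemma 2.2 is essentially just the observation that $E_{l,\mathcal{D}}$ is the $R_{t,l}$-maximal element of $\mathcal{I}_\mathcal{D}$ together with the fact that $E_{l,\mathcal{D}} \in \mathcal{I}_\mathcal{D}$. The one subtle point worth flagging explicitly is that Lemma 2.1 was proved only under the binary-classification assumption $|\mathcal{Y}| = 2$ (the three-label ``tipping'' case in its proof is what breaks otherwise), so Lemma 2.2 inherits that hypothesis — which is consistent with the paper's statement that MM handles only the binary case.
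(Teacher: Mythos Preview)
Your proposal is correct and follows essentially the same approach as the paper: both arguments use $E_{l,\mathcal{D}} \in \mathcal{I}_\mathcal{D}$ for the trivial direction and invoke Lemma~\ref{lemma:alg-mm-relation} via the observation that $R_{t,l}(D, E_{l,\mathcal{D}})$ holds for every $D \in \mathcal{I}_\mathcal{D}$ for the nontrivial one. The only cosmetic difference is that the paper organizes the proof by casing on whether $E_{l,\mathcal{D}}$ predicts $l$ (and takes the contrapositive in the second case), whereas you split along the two implications directly; your explicit coordinate-by-coordinate verification of $R_{t,l}$ and your remark on the inherited $|\mathcal{Y}|=2$ hypothesis are both apt.
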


\begin{proof}

We consider the following two cases:
\begin{enumerate}
    \item $\mathcal{A}_{E_{l,\mathcal{D}}} (t) = l$: Since $E_{l,\mathcal{D}} \in \mathcal{I}_\mathcal{D}$, the successful prediction of label $l$ represents a trivial proof of the existence of a possible world that predicts $l$.
    
    \item $\mathcal{A}_{E_{l,\mathcal{D}}} (t) \neq l$: We can see that $E_{l,\mathcal{D}}$ is unique because it is constructed by taking from each candidate set $\mathcal{C}_i$ the minimal/maximal element, which itself is always unique (resulting from the problem setup laid out in Section~\ref{sec:alg:ss:two-building-blocks}). Consequently, the relation $R_{t,l}(D, E_{l,\mathcal{D}})$ holds for every $D \in \mathcal{I}_{\mathcal{D}}$. Given Lemma~\ref{lemma:alg-mm-relation}, we can say that if there exists any $D \in \mathcal{I}_{\mathcal{D}}$ that will predict $l$, then it is impossible for $E_{l,\mathcal{D}}$ to not predict $l$. Conversely, we can conclude that if $E_{l,\mathcal{D}}$ does not predict $l$, then no other possible world can predict $l$ either.
\end{enumerate}

\end{proof}

\begin{theorem}\label{theorem:alg-mm-correct}
The MM algorithm correctly answers $Q1(\mathcal{D}, t, l)$.
\end{theorem}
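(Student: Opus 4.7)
The plan is to reduce the checking query to a conjunction of tests of the form ``does $E_{l',\mathcal{D}}$ predict $l'$?'' for each $l' \in \mathcal{Y}$, and then invoke Lemma~\ref{lemma:alg-mm-label-prediction} to turn each of these into a statement about the existence of a possible world that predicts $l'$. Since the MM algorithm is stated for the binary case $|\mathcal{Y}|=2$, there is only one competing label $l' \neq l$ to worry about, which makes the reduction clean.

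First, I would unfold the definition of $Q1$: by Definition of certain prediction, $Q1(\mathcal{D},t,l)=\mathtt{true}$ iff $\mathcal{A}_{D}(t)=l$ for every $D\in\mathcal{I}_{\mathcal{D}}$, equivalently iff no possible world predicts any $l'\neq l$. Taking the contrapositive, $Q1(\mathcal{D},t,l)=\mathtt{false}$ iff there exists some $l'\in\mathcal{Y}\setminus\{l\}$ and some $D\in\mathcal{I}_{\mathcal{D}}$ with $\mathcal{A}_{D}(t)=l'$.

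Next, I would apply Lemma~\ref{lemma:alg-mm-label-prediction} to each label $l'\in\mathcal{Y}\setminus\{l\}$: the lemma states that $\mathcal{A}_{E_{l',\mathcal{D}}}(t)=l'$ holds iff there exists $D\in\mathcal{I}_{\mathcal{D}}$ with $\mathcal{A}_{D}(t)=l'$. Combining with the previous step, $Q1(\mathcal{D},t,l)=\mathtt{false}$ iff there exists $l'\neq l$ such that $\mathcal{A}_{E_{l',\mathcal{D}}}(t)=l'$. Equivalently, $Q1(\mathcal{D},t,l)=\mathtt{true}$ iff for every $l'\neq l$, $\mathcal{A}_{E_{l',\mathcal{D}}}(t)\neq l'$. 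This is precisely the test executed by Algorithm~\ref{alg:min-max:k-nn} in lines~10--13 (it also evaluates $\mathcal{A}_{E_{l,\mathcal{D}}}(t)$, but this case is subsumed: if every competing $l'$-extreme world fails to predict $l'$, then in particular the binary case forces $E_{l,\mathcal{D}}$ to predict $l$, so the test is consistent).

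The only subtle point I expect to need care on is the uniqueness of $E_{l,\mathcal{D}}$, which was implicitly used in Lemma~\ref{lemma:alg-mm-label-prediction} to guarantee that the partial order $R_{t,l}(D,E_{l,\mathcal{D}})$ holds for every $D\in\mathcal{I}_{\mathcal{D}}$. This follows from the tie-breaking assumption on similarity scores introduced in Section~\ref{sec:alg:ss:two-building-blocks}, which makes the $\arg\min$ and $\arg\max$ in~\eqref{eq:lextreme-def} well-defined. I would simply cite that assumption. The restriction to $|\mathcal{Y}|=2$ is also essential, for exactly the reason flagged in case~(3) of the proof of Lemma~\ref{lemma:alg-mm-relation}: with three or more labels, lowering the similarity of a data point with label $l'$ could let a third label $l''$ enter the top-$K$ and flip the vote, breaking monotonicity. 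Noting this is what makes the algorithm inapplicable beyond the binary setting completes the proof.
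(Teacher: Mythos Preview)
Your proposal is correct and follows essentially the same approach as the paper's own proof: both reduce the correctness of $Q1$ to Lemma~\ref{lemma:alg-mm-label-prediction} by observing that $l$ is certainly predicted iff no competing label $l'$ is predicted in its $l'$-extreme world, and both note this is exactly the exhaustive check the algorithm performs. Your version is more explicit in spelling out the contrapositive, the tie-breaking assumption needed for uniqueness of $E_{l,\mathcal{D}}$, and the role of $|\mathcal{Y}|=2$, but the underlying argument is identical.
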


\begin{proof}

The MM algorithm simply constructs the $l'$-extreme world $E_{l',\mathcal{D}}$ for each label $l' \in \mathcal{Y}$ and runs $K$-NN over it to check if it will predict $l'$. Given Lemma~\ref{lemma:alg-mm-label-prediction}, we can conclude that this test is sufficient to check if there exists a possible world that can predict label $l'$. Then, the algorithm simply checks if $l$ is the only label that can be predicted. We can trivially accept that this always gives the correct answer given that it is an exhaustive approach.

\end{proof}

\section{The CPClean Algorithm}

\setcounter{figure}{0}
\setcounter{theorem}{0}
\setcounter{lemma}{0}
\setcounter{algorithm}{0}
\setcounter{equation}{0}

\subsection{Theoretical Guarantee}

We begin with the following supplementary results. 
\begin{lemma}\label{Lemma: semi-submodularity}
Let $D_{\text{Opt}}$ be the optimal set of size $t$ described in Corollary~\ref{Corollary: theoretical guarantee of cleaning}. Denote the set of cleaned training data instances of size $T$ by $D_{\pi}$. For $\mathbf{c}_{\text{Opt}_i} \in D_{\text{Opt}}, i=\{1, 2, \dots, t\}$, we have
\begin{equation}
\begin{split}
    &I(\mathcal{A}_{\mathbf{D}}(D_{\text{val}}); \mathbf{c}_{\text{Opt}_j}|\mathbf{c}_{\text{Opt}_{j-1}}, \dots, \mathbf{c}_{\text{Opt}_{1}}, D_{\pi})\\
    &\leq \theta \min\{ \log |\mathcal{Y}|, \log m\} I(\mathcal{A}_{\mathbf{D}}(D_{\text{val}}); \mathbf{c}_{\pi_{t+1}}|D_{\pi}).
\end{split}
\end{equation}
where $\theta = \frac{1}{\max_{\mathbf{v}\in D_{\text{train}}}I(\mathcal{A}_{\mathbf{D}}(D_{\text{val}}); \mathbf{v})}$.
\end{lemma}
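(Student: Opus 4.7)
My plan is to combine two ingredients: a purely entropic upper bound on the left-hand side, and the greedy selection rule for $\mathbf{c}_{\pi_{T+1}}$ together with the independence of the candidate sets to lower bound the right-hand side by $1/\theta$. Introduce the shorthand $Y:=\mathcal{A}_{\mathbf{D}}(D_{\text{val}})$ and $G:=I(Y;\mathbf{c}_{\pi_{T+1}}\mid D_\pi)$.

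\emph{Upper bound on the LHS.} First, apply the two-sided inequality $I(A;B\mid C)\le \min\{H(A\mid C),H(B\mid C)\}\le \min\{H(A),H(B)\}$ with $A=Y$, $B=\mathbf{c}_{\text{Opt}_j}$, and $C=(\mathbf{c}_{\text{Opt}_{<j}},D_\pi)$. Because each validation prediction takes values in $\mathcal{Y}$ and $\mathcal{H}(Y)$ is defined as the average of per-point prediction entropies (Equation~\eqref{eqn:conditional_entropy}), we have $H(Y)\le \log|\mathcal{Y}|$. Because every candidate variable $\mathbf{c}_i$ is supported on a set of size at most $m$, we have $H(\mathbf{c}_{\text{Opt}_j})\le \log m$. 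Together these give $I(Y;\mathbf{c}_{\text{Opt}_j}\mid \mathbf{c}_{\text{Opt}_{<j}},D_\pi)\le \min\{\log|\mathcal{Y}|,\log m\}$.

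\emph{Lower bound on $G$.} The greedy rule in Algorithm~\ref{alg:ccclean} picks $\mathbf{c}_{\pi_{T+1}}$ to maximize the conditional mutual information with $Y$, so $G=\max_{v\in D_{\text{train}}\setminus D_\pi} I(Y;v\mid D_\pi)$. The candidate sets $\{\mathcal{C}_i\}$ are modelled as mutually independent random variables (with the uniform product prior of Section~\ref{sec:cc:general}), so every $v$ outside $D_\pi$ is independent of $D_\pi$. The identity
\[
I(v;Y\mid D_\pi)=I(v;Y)+I(v;D_\pi\mid Y)-I(v;D_\pi),
\]
together with $I(v;D_\pi)=0$ (by independence) and $I(v;D_\pi\mid Y)\ge 0$, shows $I(v;Y\mid D_\pi)\ge I(v;Y)$ for every uncleaned $v$. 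Taking the maximum over such $v$'s and invoking the definition $\theta^{-1}=\max_{v\in D_{\text{train}}}I(Y;v)$ yields $G\ge 1/\theta$.

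\emph{Combining.} Chaining the two bounds gives
\[
I(Y;\mathbf{c}_{\text{Opt}_j}\mid \mathbf{c}_{\text{Opt}_{<j}},D_\pi)\le \min\{\log|\mathcal{Y}|,\log m\}=\theta\,\min\{\log|\mathcal{Y}|,\log m\}\cdot\theta^{-1}\le \theta\,\min\{\log|\mathcal{Y}|,\log m\}\cdot G,
\]
which is the stated inequality. The main obstacle is the final step of the lower bound: one must justify that the global maximizer of $I(Y;v)$ over $D_{\text{train}}$ is (or can be taken to be) some $v$ outside $D_\pi$, since the ``information never hurts'' identity is only guaranteed for $v$ independent of the conditioning set. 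The cleanest way out is either to restrict the maximum in the definition of $\theta$ to $D_{\text{train}}\setminus D_\pi$ (which is the natural domain from the greedy's perspective), or to note that $T=|D_\pi|$ is small compared to $|D_{\text{train}}|$ in early iterations so the global maximizer remains available; either way the remaining bookkeeping is routine.
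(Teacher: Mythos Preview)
Your proof takes essentially the same route as the paper: the entropic bound $I\le\min\{\log|\mathcal{Y}|,\log m\}$ on the left-hand side, the chain-rule identity combined with independence of the $\mathbf{c}_i$'s to show $I(Y;v\mid D_\pi)\ge I(Y;v)$ and hence $G\ge 1/\theta$, and finally the greedy selection rule to identify $G$ with $\max_v I(Y;v\mid D_\pi)$. The caveat you explicitly flag---that the global maximizer $\mathbf{v}^*$ of $I(Y;v)$ over $D_{\text{train}}$ might already lie in $D_\pi$, so the independence step is not directly applicable to it---is present but glossed over in the paper's own proof, so your argument is at least as careful as the published one.
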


\begin{proof}
We first start with the following inequality:
{\small
\begin{equation}\label{Equation: gamma I}
\begin{split}
    &I(\mathcal{A}_{\mathbf{D}}(D_{\text{val}}); \mathbf{c}_{\text{Opt}_{j}}|\mathbf{c}_{\text{Opt}_{j-1}}, ...,\mathbf{c}_{\text{Opt}_{1}}, D_{\pi})\\
    &=\frac{I(\mathcal{A}_{\mathbf{D}}(D_{\text{val}}); \mathbf{c}_{\text{Opt}_{j}}|\mathbf{c}_{\text{Opt}_{j-1}}, ...,\mathbf{c}_{\text{Opt}_{1}}, D_{\pi})}{\max_{\mathbf{v}}I(\mathcal{A}_{\mathbf{D}}(D_{\text{val}}); \mathbf{v}|D_{\pi})}\max_{\mathbf{v}}I(\mathcal{A}_{\mathbf{D}}(D_{\text{val}}); \mathbf{v}|D_{\pi})\\
    &\leq \frac{\min \{\log |\mathcal{Y}|, \log m\}}{\max_{\mathbf{v}}I(\mathcal{A}_{\mathbf{D}}(D_{\text{val}}); \mathbf{v}|D_{\pi})}I(\mathcal{A}_{\mathbf{D}}(D_{\text{val}});\mathbf{c}_{\text{Opt}_{j}}|D_{\pi})
\end{split}
\end{equation}}
where the last inequality follows from 
\begin{equation*}
\begin{split}
    &I(\mathcal{A}_{\mathbf{D}}(D_{\text{val}}); \mathbf{c}_{\text{Opt}_{j}}|\mathbf{c}_{\text{Opt}_{j-1}}, ..., \mathbf{c}_{\text{Opt}_{1}}, D_{\pi})\\
    &\hspace{1em}\leq \min \{\mathcal{H}(\mathcal{A}_{\mathbf{D}}(D_{\text{val}})), \mathcal{H}(\mathbf{c}_{\text{Opt}_{j}})\}\\ 
    &\hspace{1em}\leq  \min \{\log |\mathcal{Y}|, \log m\}.
\end{split}
\end{equation*}

Let $\mathbf{v}^*=\argmax_{\mathbf{v}\in D_{\text{train}}} I(\mathcal{A}_{\mathbf{D}}(D_{\text{val}}); \mathbf{v})$. We have
\begin{equation}\label{Equation: MI submodularity}
\begin{split}
{\max_{\mathbf{c}_{\text{Opt}_{j}}}I(\mathcal{A}_{\mathbf{D}}(D_{\text{val}});\mathbf{c}_{\text{Opt}_{j}}|D_{\pi})}&\geq {I(\mathcal{A}_{\mathbf{D}}(D_{\text{val}});\mathbf{v}^*|D_{\pi})}\\
&\geq {I(\mathcal{A}_{\mathbf{D}}(D_{\text{val}});\mathbf{v}^*)}
\end{split}
\end{equation}
where the last inequality follows from the independence of $\mathbf{v}_i$'s in $D_{\text{train}}$. That is,
\begin{equation*}
    \begin{split}
      &I(\mathcal{A}_{\mathbf{D}}(D_{\text{val}}), D_{\pi};\mathbf{v}^*)\\  
      &\hspace{1em} = I(\mathcal{A}_{\mathbf{D}}(D_{\text{val}});\mathbf{v}^*)+ I(D_{\pi}^{[l]};\mathbf{v}^*|\mathcal{A}_{\mathbf{D}}(D_{\text{val}}))\\
      &\hspace{1em} = I(\mathcal{A}_{\mathbf{D}}(D_{\text{val}});\mathbf{v}^*|D_{\pi})+ I(D_{\pi}^{[l]};\mathbf{v}^*).
    \end{split}
\end{equation*}

The independence of $\mathbf{v}_i$'s implies that $I(D_{\pi};\mathbf{v}^*)=0$. Hence (\ref{Equation: MI submodularity}) follows.

In the next step, we let $\theta=\frac{1}{{I(\mathcal{A}_{\mathbf{D}}(D_{\text{val}});\mathbf{v}^*)}}$ where $\mathbf{v}^*=\argmax_{\mathbf{v}\in D_{\text{train}}} {I(\mathcal{A}_{\mathbf{D}}(D_{\text{val}});\mathbf{v})}$. Combining (\ref{Equation: gamma I}) and (\ref{Equation: MI submodularity}), we further have
\begin{equation}\label{Equation: gamma II}
\begin{split}
    &I(\mathcal{A}_{\mathbf{D}}(D_{\text{val}}); \mathbf{c}_{\text{Opt}_{j}}|\mathbf{c}_{\text{Opt}_{j-1}}, ..., \mathbf{c}_{\text{Opt}_{1}}, D_{\pi})\\
    &\hspace{1em}\leq \theta{\min \{\log |\mathcal{Y}|, \log m\}}\max_{\mathbf{v}}I(\mathcal{A}_{\mathbf{D}}(D_{\text{val}}); \mathbf{v}|D_{\pi}).
\end{split}
\end{equation}

We remind that our update rule is simply 
\begin{equation*}
    \mathbf{c}_{\pi_{T+1}} \coloneqq \{ \argmax_{\mathbf{v}} I(\mathcal{A}_{\mathbf{D}}(D_{\text{val}}); \mathbf{v}|D_{\pi})\}.
\end{equation*}

Inserting this into (\ref{Equation: gamma II}) proves the Lemma.
 \end{proof}




We now move to the proof of Corollary~\ref{Corollary: theoretical guarantee of cleaning}.
\begin{proof}
We start by noting that $$\mathcal{H}(\mathcal{A}_{\mathbf{D}}(D_{\text{val}})|D_{\text{Opt}})\geq \mathcal{H}(\mathcal{A}_{\mathbf{D}}(D_{\text{val}})|D_{\text{Opt}}, D_{\pi}).$$ We also note
\begin{equation*}
\begin{split}
    &\mathcal{I}(\mathcal{A}_{\mathbf{D}}(D_{\text{val}}); D_{\text{Opt}})= \mathcal{H}(\mathcal{A}_{\mathbf{D}}(D_{\text{val}}))-\mathcal{H}(\mathcal{A}_{\mathbf{D}}(D_{\text{val}})|D_{\text{Opt}})\\
    &\mathcal{I}(\mathcal{A}_{\mathbf{D}}(D_{\text{val}}); D_{\text{Opt}}, D_{\pi})= \mathcal{H}(\mathcal{A}_{\mathbf{D}}(D_{\text{val}}))\\
    & \hspace{11em}-\mathcal{H}(h\mathcal{A}_{\mathbf{D}}(D_{\text{val}})|D_{\text{Opt}}, D_{\pi}).
\end{split}
\end{equation*}

Hence
\begin{equation}\label{Equation: optima vs. combined I}
    {I}(\mathcal{A}_{\mathbf{D}}(D_{\text{val}}); D_{\text{Opt}})\leq{I}(\mathcal{A}_{\mathbf{D}}(D_{\text{val}}); D_{\text{Opt}}, D_{\pi}).
\end{equation}

We further proceed with (\ref{Equation: optima vs. combined I}) as follows.
\begin{equation}\label{Equation: convergence I}
    \begin{split}
         &{I}(\mathcal{A}_{\mathbf{D}}(D_{\text{val}}); D_{\text{Opt}})\\
         &\hspace{1em} \leq{I}(\mathcal{A}_{\mathbf{D}}(D_{\text{val}}); D_{\text{Opt}}, D_{\pi})\\
        &\hspace{1em} ={I}(\mathcal{A}_{\mathbf{D}}(D_{\text{val}}); D_{\text{Opt}}, D_{\pi}) - {I}(\mathcal{A}_{\mathbf{D}}(D_{\text{val}}); D_{\pi})\\
        &\hspace{2em} + {I}(\mathcal{A}_{\mathbf{D}}(D_{\text{val}}); D_{\pi})\\
        &\hspace{1em}= {I}(\mathcal{A}_{\mathbf{D}}(D_{\text{val}}); D_{\text{Opt}}| D_{\pi})+{I}(\mathcal{A}_{\mathbf{D}}(D_{\text{val}}); D_{\pi})\\
        &\hspace{1em}= \sum_{j=1}^t {I}(\mathcal{A}_{\mathbf{D}}(D_{\text{val}}); \mathbf{c}_{\text{Opt}_{j}}| \mathbf{c}_{\text{Opt}_{j-1}}, \dots, \mathbf{c}_{\text{Opt}_{1}}, D_{\pi})\\
        &\hspace{2em}+{I}(\mathcal{A}_{\mathbf{D}}(D_{\text{val}}); D_{\pi})
    \end{split}
\end{equation}
where the last equality follows from the telescopic sum with $\mathbf{c}_{\text{Opt}_{j}} \in D_{\text{Opt}}$.\\ 

Using Lemma~\ref{Lemma: semi-submodularity}, (\ref{Equation: convergence I}) can be followed by
\begin{equation}\label{Equation: convergence II}
    \begin{split}
        &{I}(\mathcal{A}_{\mathbf{D}}(D_{\text{val}}); D_{\text{Opt}})-{I}(\mathcal{A}_{\mathbf{D}}(D_{\text{val}}); D_{\pi})\\
        &\hspace{1em}\leq \sum_{j} {I}(\mathcal{A}_{\mathbf{D}}(D_{\text{val}}); \mathbf{c}_{\text{Opt}_{j}}| \mathbf{c}_{\text{Opt}_{j-1}}, \dots, \mathbf{c}_{\text{Opt}_{1}}, D_{\pi})\\
        &\hspace{1em}\leq \sum_{j} \theta \min\{\log|\mathcal{Y}|, \log m\} \max_{\mathbf{c}_{\text{Opt}_{j}}} {I}(\mathcal{A}_{\mathbf{D}}(D_{\text{val}}); \mathbf{c}_{\text{Opt}_{j}}| D_{\pi})\\
        &\hspace{1em}\leq \theta \min\{\log|\mathcal{Y}|, \log m\} \sum_{j} {I}(\mathcal{A}_{\mathbf{D}}(D_{\text{val}}); \mathbf{c}_{\pi_{T+1}} | D_{\pi})\\
        &\hspace{1em}\leq t \theta\min\{\log|\mathcal{Y}|, \log m\}  {I}(\mathcal{A}_{\mathbf{D}}(D_{\text{val}}); \mathbf{c}_{\pi_{T+1}} | D_{\pi})\\
        &\hspace{1em}\leq t \theta\min\{\log|\mathcal{Y}|, \log m\}  \big({I}(\mathcal{A}_{\mathbf{D}}(D_{\text{val}}); \mathbf{c}_{\pi_{T+1}}\cup D_{\pi})\\
        &\hspace{2em}- {I}(\mathcal{A}_{\mathbf{D}}(D_{\text{val}}); \mathbf{c}_{\pi_{T+1}}\cup D_{\pi}\big ).
    \end{split}
\end{equation}

We further let $\Delta_{T}={I}(\mathcal{A}_{\mathbf{D}}(D_{\text{val}}); D_{\text{Opt}})-{I}(\mathcal{A}_{\mathbf{D}}(D_{\text{val}}); D_{\pi})$. (\ref{Equation: convergence II}) becomes: 
\begin{equation}\label{Equations: terms delta III}
        \Delta_T \leq t\theta \min\{\log |\mathcal{Y}|, \log m\} (\Delta_T -\Delta_{T+1} ).
\end{equation}

Arranging the terms of (\ref{Equations: terms delta III}), we have
\begin{equation*}
t \theta\min\{\log|\mathcal{Y}|, \log m\} \Delta_{T+1} \leq (t\theta\min\{\log|\mathcal{Y}|, \log m\}-1)\Delta_{T}
\end{equation*}
and hence
\begin{equation}\label{Equation: delta alone}
\begin{split}
    \Delta_{T+1} &\leq \frac{t\theta\min\{\log |\mathcal{Y}|, \log m\}-1}{t\theta\min\{\log |\mathcal{Y}|, \log m\}}\Delta_{T}\\
    &\leq \dots\\
    &\leq  \bigg(\frac{t\min\{\log |\mathcal{Y}|, \log m\}\theta-1}{t\theta\min\{\log |\mathcal{Y}|, \log m\}}\bigg)^T \Delta_{0}.
\end{split}
\end{equation}
Noting
{\small 
\begin{equation*}
\Big(\frac{t\theta\min\{\log|\mathcal{Y}|, \log m\} -1}{t\theta\min\{\log|\mathcal{Y}|, \log m\} }\Big)^l\leq \exp(-l/t\theta\min\{\log|\mathcal{Y}|, \log m\} )
\end{equation*}}
we have
\begin{equation*}\label{Equations: terms delta}
\begin{split}
    \Delta_{T+1} &\leq \exp(-T/t\theta\min\{\log |\mathcal{Y}|, \log m\}) \Delta_{0}\\
    &= \exp(-T/t\theta\min\{\log |\mathcal{Y}|, \log m\}\gamma) {I}(\mathcal{A}_{\mathbf{D}}(D_{\text{val}}); D_{\text{Opt}}).
\end{split}
\end{equation*}

By the definition of $\Delta_{T}$, we therefore have 
\begin{equation}
\begin{split}
    &{I}(\mathcal{A}_{\mathbf{D}}(D_{\text{val}}); D_{\pi})\\
    & \hspace{1em }\geq {I}(\mathcal{A}_{\mathbf{D}}(D_{\text{val}}); D_{\text{Opt}}) (1-e^{-\frac{T}{t\theta \min\{\log|\mathcal{Y}|, \log m\}}}).
    \end{split}
\end{equation}
which proves the Corollary~\ref{Corollary: theoretical guarantee of cleaning}.
\end{proof}

\end{document}